\definecolor{frenchblue}{rgb}{0.0, 0.45, 0.73}
\definecolor{forestgreen(web)}{rgb}{0.13, 0.55, 0.13}
\newcommand{\spro}{\begin{proof}}
\newcommand{\fpro}{\end{proof}}
\def\thmt@refnamewithcomma #1#2#3,#4,#5\@nil{%
  \@xa\def\csname\thmt@envname #1utorefname\endcsname{#3}%
  \ifcsname #2refname\endcsname
    \csname #2refname\expandafter\endcsname\expandafter{\thmt@envname}{#3}{#4}%
  \fi
}
\declaretheorem[numberwithin=section, name=Theorem,Refname={Theorem,Theorems}]{theo}
\declaretheorem[numberwithin=section, name=Lemma,Refname={Lemma,Lemmas}]{lemm}
\declaretheorem[numberwithin=section, name=Definition,Refname={Definition,Definitions}]{defi}
\declaretheorem[numberwithin=section, name=Assumption,Refname={Assumption,Assumptions}]{assm}
\declaretheorem[numberwithin=section, name=Definition,Refname={Definition,Definitions}]{definition}
\declaretheorem[numberwithin=section, name=Remark,Refname={Proposition,Propositions}]{remark}
\DeclareMathOperator{\Expe}{\mathbb{E}}
\DeclareMathOperator{\Real}{\mathbb{R}}
\newcommand{\normtv}[1]{\norm{#1}_{\rm TV}}
\newcommand{\Demo}{\mathcal{D}}
\newcommand{\pih}{\widehat{\pi}}
\newcommand{\pis}{\pi_{\rm E}}
\newcommand{\dist}{\rho_{\pis}}
\newcommand{\pir}{\widehat{\pi}^{\rm RBC}}
\newcommand{\fhat}{\widehat{f}}
\newcommand{\Fcal}{\mathcal{F}}
\newcommand{\Dist}{\mathscr{D}}
\newcommand{\Dcal}{\mathcal{D}}
\DeclarePairedDelimiter\abs{\lvert}{\rvert}%
\DeclarePairedDelimiter\norm{\lVert}{\rVert}%
\let\oldabs\abs
\def\abs{\@ifstar{\oldabs}{\oldabs*}}
\let\oldnorm\norm
\def\norm{\@ifstar{\oldnorm}{\oldnorm*}}
\newcommand{\obs}{\mathbf{s}}
\newcommand{\acts}{\mathbf{a}}
\newcommand{\rhoExp}{\rho_{\pis}}
\newcommand{\Sset}{\mathcal{S}}
\newcommand{\Aset}{\mathcal{A}}
\newcommand{\Tran}{\mathsf{P}}
\newcommand{\E}{\Expe}
\setlist{leftmargin=3.5mm}
\title{
\textbf{Robust Imitation Learning from Corrupted Demonstrations
}}
\author[1]{Liu Liu} 
\author[2]{Ziyang Tang}
\author[1]{Lanqing Li}
\author[1]{Dijun Luo}
\affil[1]{Tencent AI Lab}
\affil[2]{The University of Texas at Austin
}
\affil[ ]{\texttt {\{leonliuliu,lanqingli,dijunluo\}@tencent.com},
\texttt{ztang@cs.utexas.edu}}
\date{}
\begin{document}

\maketitle





\begin{abstract}
We consider offline Imitation Learning  
from \emph{corrupted demonstrations} where a constant fraction of data can be noise or even arbitrary outliers. Classical approaches such as Behavior Cloning assumes that demonstrations are collected by an presumably optimal expert, hence
may fail drastically when learning from corrupted demonstrations.
We propose a novel robust algorithm by minimizing a Median-of-Means (MOM) objective which guarantees the accurate estimation of policy, even in the presence of \emph{constant} fraction of outliers. 
Our theoretical analysis shows that our robust method in the corrupted setting enjoys nearly the same error scaling and sample complexity guarantees as the classical Behavior Cloning in the expert demonstration setting.
Our experiments on continuous-control benchmarks validate that  our method exhibits the predicted robustness and effectiveness,
and achieves competitive results compared to existing imitation learning methods.
\end{abstract}

\section{Introduction}
\label{sec:intro}









Recent years have witnessed the success of using  autonomous agent
to learn and adapt to complex tasks and environments in a range of applications such as playing games~\citep[e.g.][]{mnih2015human, silver2018general, vinyals2019grandmaster}, autonomous driving~\citep[e.g.][]{kendall2019learning, bellemare2020autonomous}, robotics~\citep{haarnoja2017SoftQ}, medical treatment~\citep[e.g.][]{RLHealthCare} and recommendation system and advertisement~\citep[e.g.][]{li11unbiased, thomas17predictive}.

Previous success for sequential decision making often requires two key components: 
(1) a careful design reward function that can provide the supervision signal during learning and 
(2) an unlimited number of online interactions with the real-world environment (or a carefully designed simulator) to query new unseen region.
However, in many scenarios, both components are not allowed.
For example, it is hard to define the reward signal in uncountable many extreme situations in autonomous driving \cite{survey_DRL_autonomousdriving};
and it is dangerous and risky to directly deploy a learning policy on human to gather information in autonomous medical treatment \citep{RLHealthCare}.
Therefore an \emph{offline} sequential decision making algorithm without reward signal is in demand.

Imitation Learning (IL) \cite{Pieter2018algorithmic}  offers an elegant way to train intelligent agents for complex task without the knowledge of reward functions. 
In order to guide intelligent agents to correct behaviors, it is crucial to have high quality expert demonstrations.
The well-known imitation learning algorithms such as Behavior Cloning (BC, \cite{Pomerleau1988NIPS_BC}) or
Generative Adversarial Imitation Learning (GAIL, \cite{ho2016GAIL})
require that the demonstrations given for training are all \emph{presumably optimal} and it aims to learn the optimal policy from expert demonstration data set.
More specifically, BC only
uses offline demonstration data without any interaction with
the environment, whereas GAIL requires online interactions.

However in real world scenario,
since the demonstration is often collected from human, we cannot guarantee that \emph{all} the demonstrations we  collected have high quality.
This has been addressed in a line of research \cite{wu2019imperfect, tangkaratt2020VILD, tangkaratt2021RobustImitation, brown2019TREX, NoisyBC2020}.
An human expert can  make mistakes by accident or due to the hardness of a complicated scenario (e.g., medical diagnosis).
Furthermore, even an expert demonstrates a successful behavior, the recorder or the recording system can have a chance to contaminate the data by accident or on purpose \citep[e.g.][]{neff2016automation, eykholt2018robust, Zhu2106}.

This leads to the central question of the paper:
\begin{center}
\fbox{\begin{varwidth}{\columnwidth}
\centering 
Can the optimality assumption on expert demonstrations be weakened or even tolerate arbitrary outliers under offline imitation learning settings?
\end{varwidth}}
\end{center}



\begin{figure}
    \centering
    \includegraphics[width=.5\linewidth]{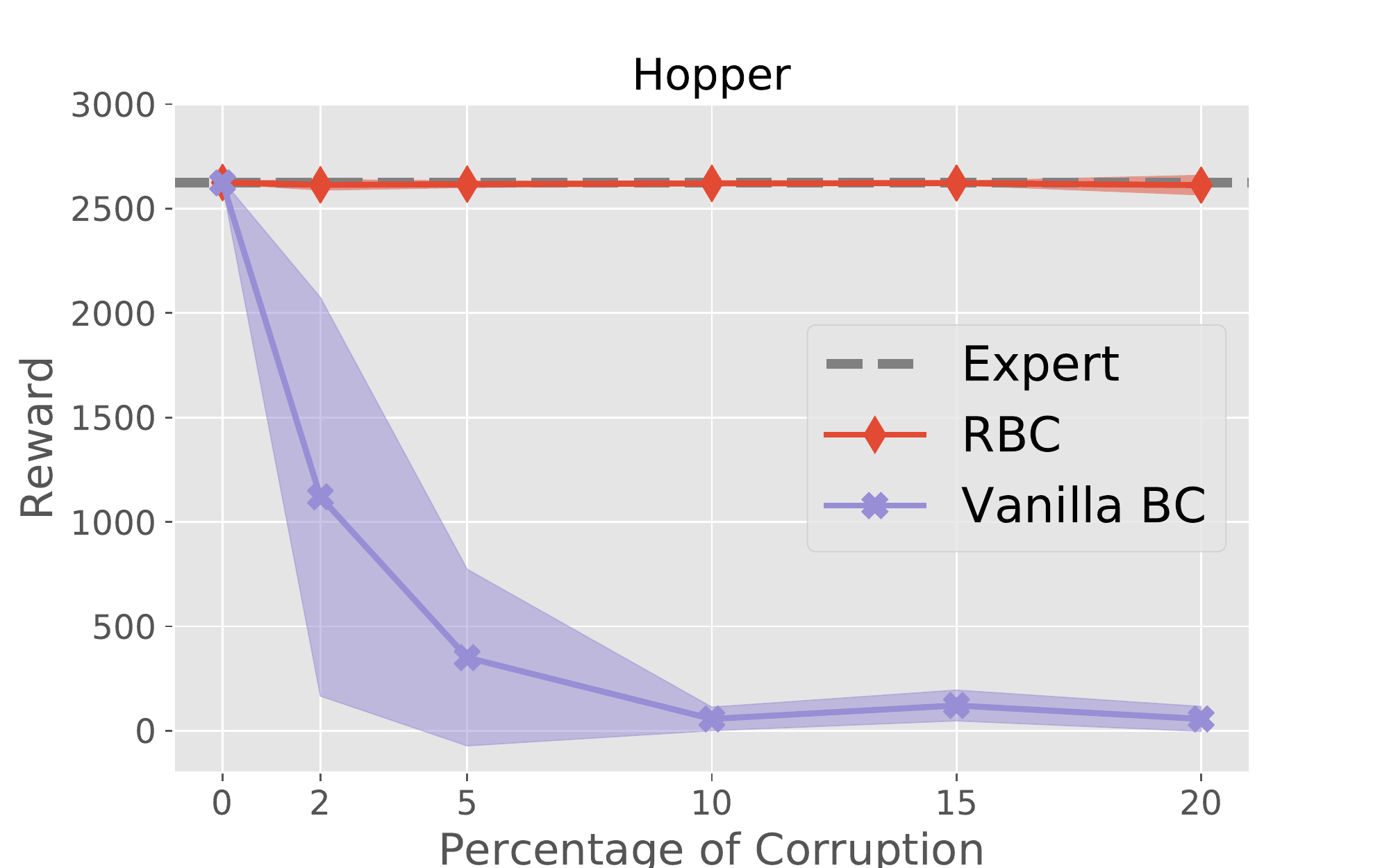}
 \caption{Reward vs. percentage of corruptions in Hopper environment
 from the PyBullet\cite{coumans2016pybullet} with corrupted demonstrations.
 We fix the sample size for the demonstration data set, and vary the fraction of corruptions $\epsilon$ up to 20\%. 
 Shaded region represents one standard deviation for 20 trials.  Our algorithm Robust Behavior Cloning (RBC) on corrupted demonstrations has nearly the same performance as BC on expert demonstrations (this is the case when $\epsilon=0$), which achieves expert level. 
 And it barely changes when $\epsilon$ grows larger to 20\%. 
 By contrast, the performance of vanilla BC on corrupted demos fails drastically. The detailed experimental setup and comparisons with existing methods are included in \Cref{fig:curve} and \Cref{fig:curve_size}.
}
  \label{fig:Curve_Hopper}
\end{figure}

More concretely,
we consider \emph{corrupted demonstrations} setting where the majority of the demonstration data is collected by an expert policy (presumably optimal), and the remaining data can be even \emph{arbitrary} outliers (the formal definition is presented in  \Cref{def:Huber}).

Such definitions allowing \emph{arbitrary} outliers for the corrupted samples have rich history in robust statistics \cite{Huber1964RobustEO, huber2011textbook}, yet 
have not been widely used in imitation learning. 
This has great significance in many applications, such as automated medical diagnosis for healthcare (\cite{RLHealthCare}) and autonomous driving \citep{ma2018improved}, where the historical data (demonstration) is often complicated and noisy which requires robustness consideration.

However, the classical \emph{offline} imitation learning approaches such as Behavior Cloning (BC) fails drastically under this corrupted demonstration settings.
We illustrated this phenomenon in \Cref{fig:Curve_Hopper}. We use BC on Hopper environment (a continuous control environment from PyBullet \cite{coumans2016pybullet}), and  the performance of the policy learned by BC drops drastically as the fraction of corruptions increases in the offline demonstration data set. 

In this paper, we propose a novel robust imitation learning algorithm -- Robust Behavior Cloning (RBC, \Cref{alg:RBC}), which is resilient to corruptions in the offline demonstrations.
Particularly, our RBC does not require potentially costly or risky interaction with the real world environment or any human annotations.
In \Cref{fig:Curve_Hopper},
Our RBC on corrupted demonstrations has nearly the same performance as BC on expert demonstrations (this is the case when $\epsilon=0$), which achieves expert level. 
 And it barely changes when $\epsilon$ grows larger to 20\%.
 The detailed experimental setup and comparisons with existing methods (e.g., \cite{NoisyBC2020}) are included in \Cref{sec:Experiments}.

\subsection{Main Contributions}
 \begin{itemize}
     \item (Algorithm) We consider robustness in  offline imitation learning where we have corrupted demonstrations.  Our definition for corrupted demonstrations   significantly weakens the 
     presumably optimal assumption on demonstration data, and can tolerate a constant $\epsilon$-fraction of state-action pairs to be arbitrarily corrupted. We refer to \Cref{def:Huber} for a more precise statement.
     
     To deal with this issue, we propose a novel algorithm Robust Behavior Cloning (\Cref{alg:RBC}) for robust imitation learning.      Our algorithm works in the offline setting, without any further interaction with the environment or any
human annotations. The core ingredient of our robust algorithm is using a novel median of means objective in policy estimation compared to classical Behavior Cloning. Hence, it's simple to implement, and computationally efficient.

     \item (Theoretical guarantees)      We analyze our Robust Behavior Cloning algorithm when there exists a constant fraction of outliers in the demonstrations under the offline setting.
     To the best of our knowledge, we provide the \emph{first
     theoretical guarantee} robust to constant fraction of arbitrary outliers in offline imitation learning.
     We show that our RBC achieves nearly the same error scaling and sample complexity compared to vanilla BC with expert demonstrations. 
     To this end, our algorithm guarantees robustness to corrupted demonstrations at no cost of statistical estimation error.
     This is the content of \Cref{sec:theory}.

     \item (Empirical support)
     We validate the predicted robustness and show the effectiveness of our algorithm on 
     a number of different 
     high-dimensional continuous control benchmarks.  
     The vanilla BC is fragile indeed with corrupted demonstrations, yet our Robust Behavior Cloning is computationally efficient, and  achieves nearly the same performance compared to vanilla BC with expert demonstrations. 
     \Cref{sec:Experiments} also shows that our algorithm achieves competitive results compared to  existing imitation learning methods. 
     
 \end{itemize}


\paragraph{Notation.}
Throughout this paper, we use $\{c_i\}_{i=1,2,3}$ to denote the universal positive constant.
We utilize the big-$O$ notation $f(n) = O(g(n))$ to denote that
there exists a positive constant $c_1$ and a natural number $n_0$ such that,
for all $n\geq n_0$, we have
$f(n) \leq c_1 g(n)$.

\paragraph{Outline.}
The rest of this paper is organized as follows. In \Cref{sec:setup}, we formally define the setup and the corrupted demonstrations.
In \Cref{sec:algo}, we introduce our RBC and the
computationally efficient algorithm (\Cref{alg:RBC}). We provide the theoretical analysis in \Cref{sec:theory}, and experimental results in \Cref{sec:Experiments}. 
We leave the detailed discussion and related works in \Cref{sec:related}.
All proofs and experimental details  are collected in the Appendix.

\section{Problem Setup}
\label{sec:setup}

\subsection{Reinforcement Learning and Imitation Learning}
\paragraph{Markov Decision Process and Reinforcement Learning.}
We start the problem setup by introducing the 
Markov decision process (MDP).
An MDP
$M = \langle \Sset, \Aset , r, \Tran, \mu_0, \gamma\rangle$
consists of a state space $\Sset$, an action space $\Aset$, an unknown reward function $r:\Sset\times \Aset \to [0, \mathsf{R}_{\max}]$, 
an unknown transition kernel $\Tran:\Sset\times \Aset \to \Delta(\Sset)$, 
an initial state distribution $\mu_0\in \Delta(\Sset)$, and a discounted factor $\gamma \in (0, 1)$.  We use $\Delta$ to denote the probability distributions on the simplex.

An agent acts in a MDP following a policy $\pi(\cdot |\obs)$, 
which prescribes a distribution over the action space $\Aset$ given each state $\obs \in \Sset$.
Running the policy starting from the initial distribution $\obs_1\sim \mu_0$ 
yields a stochastic trajectory $\mathcal{T} := \{\obs_t,\acts_t,r_t\}_{1\leq t\leq \infty}$,
where $\obs_t, \acts_t, r_t$ represent the state, action, reward at time $t$ respectively,
with $\acts_t\sim \pi(\cdot|\obs_t)$ and the next state $\obs_{t+1}$ follows the unknown transition kernel $\obs_{t+1}\sim \Tran(\cdot|\obs_t, \acts_t)$.
We denote $\rho_{\pi,t}\in \Delta(\Sset\times \Aset) $ as the marginal
joint stationary distribution for state, action at time step $t$, and we define $\rho_{\pi} = (1-\gamma) \sum_{i=1}^\infty \gamma^t \rho_{\pi,t}$ as visitation distribution for policy $\pi$.
For simplicity, we reuse the notation $\rho_\pi(s) = \int_{a\in \Aset} \rho_\pi(s,a) da$ to denote the marginal distribution over state.

The goal of reinforcement learning (RL) is to find the best policy $\pi$ to maximize the expected cumulative return $J_\pi = \E_{\mathcal{T} \sim \pi}\left[\sum_{i=1}^\infty \gamma^t r_t\right]$. 
Common RL algorithms (e.g., please refer to  \cite{csaba2010algorithms}) requires online interaction and exploration with the environments. However, this is prohibited in the offline setting.

\paragraph{Imitation Learning.}
Imitation learning (IL) aims to obtain a policy to mimic expert's behavior with demonstration data set $\Demo = \{(\obs_i, \acts_i)\}_{i=1}^N$ where $N$ is the sample size of $\Demo$. Note that we do not need any reward signal.
Tradition imitation learning assumes perfect (or near-optimal) expert demonstration -- for simplification we assume that each state-action pair $(\obs_i, \acts_i)$ is drawn from the joint stationary distribution of an expert policy $\pi_{E}$:
\begin{align}\label{equ:IL}
    (\obs_i, \acts_i) \sim \rhoExp
\end{align}

Learning from demonstrations with or without online interactions has a long history (e.g., \cite{Pomerleau1988NIPS_BC, ho2016GAIL}).
The goal of \emph{offline} IL is to learn a policy $\pih^{\rm IL} = \mathbb{A}(\Demo)$ through an IL algorithm $\mathbb{A}$, given  the demonstration data set $\Demo$, without further interaction with the unknown true transition dynamic $\Tran$.




\paragraph{Behavior Cloning.}
The Behavior Cloning (BC) is the well known algorithm  \citep{Pomerleau1988NIPS_BC} for IL which only uses offline demonstration data without any interaction with the environment.  More specifically, 
BC
solves the following Maximum Likelihood Estimation (MLE) problem, which minimizes the average Negative Log-Likelihood (NLL) for all samples in offline demonstrations $\Demo$:
\begin{align}\label{equ:pi_BC}
    \pih^{\rm BC} = \arg \min_{\pi \in \Pi} \frac{1}{ N}  \sum_{(s, a) \in 
    \Demo } -\log(\pi(a | s)).
\end{align} 

Recent works \citep{Agarwal2019AJKS,Jiao2020FundamentalLimits, YuYang2021error, xu2021TAIL} have shown that BC is optimal under the offline setting, and can only be improved with the knowledge of transition dynamic $\Tran$ \emph{in the worst case}. 
Also, another line of research considers improving BC with further online interaction of the environment \citep{brantley_SunWen_2019disagreement} or actively querying an expert \citep{Ross2011AIStats, ross2014reinforcement}.

\subsection{Learning from corrupted  demonstrations
}

 However, it is sometimes unrealistic to assume that the demonstration data set is collected through a presumably optimal expert policy. In this paper, we propose \Cref{def:Huber} for the corrupted demonstrations, which tolerates gross corruption or model mismatch in offline data set.

\begin{definition}[Corrupted Demonstrations]\label{def:Huber}
Let the state-action pair $(\obs_{i}, \acts_{i})_{i=1}^N$  drawn from the joint stationary distribution of a presumably optimal  expert policy $\pi_{E}$.
The corrupted demonstration data
$\Demo$ are generated by the following process: an adversary can choose an
arbitrary $\epsilon$-fraction ($\epsilon < 0.5$) of the samples in $[N]$ and modifies them with arbitrary values.
We note that $\epsilon$ is a constant independent of the dimensions of the problem.
After the corruption, we use $\Demo$ to denote the corrupted demonstration data set.
\end{definition}

This corruption process can represent gross corruptions or model mismatch in the demonstration data set.
To the best of our knowledge, \Cref{def:Huber} is the first definition for corrupted demonstrations in imitation learning which tolerates \emph{arbitrary} corruptions.

In the supervised learning, the well-known Huber's contamination model (\cite{Huber1964RobustEO, huber2011textbook}) considers $(\bm{x}, y) \overset{iid}{\sim} (1-\epsilon)P + \epsilon Z,$ 
where $\bm{x} \in \Real^d$ is the explanatory variable (feature)  and  $y\in\Real$ is the response variable.
Here, $P$ denotes the \emph{authentic} statistical distribution such as Normal mean estimation or linear regression model, and $Z$ denotes the outliers. 

Dealing with corrupted $\bm{x}$ and $y$ in high dimensions has a long history in the robust statistics community \citep[e.g.][]{rousseeuw1984least,chen2013robust, chen2017distributed, Yin_median}.
However, it's only until recently that robust statistical methods can handle \emph{constant} $\epsilon$-fraction (independent of dimensionality $\Real^d$) of outliers in $\bm{x}$ and $y$
\citep{klivans2018efficient,ravikumar2020RGD, sever2018, liu2019, liuAISTATS,shen2019ITLM,lugosi2019risk,lecue2020robust, jalal2020robust}.
We note that in Imitation Learning, the data collecting process for the demonstrations does not obey i.i.d. assumption in traditional supervised learning due to the temporal dependency.

\section{Our Algorithm}
\label{sec:algo}


\Cref{equ:pi_BC} directly 
minimizes the empirical mean of Negative Log-Likelihood, and it is widely known that the mean operator is fragile to corruptions \cite{Huber1964RobustEO,huber2011textbook}.
Indeed, our
experiment in \Cref{fig:Curve_Hopper} demonstrates that in the presence of outliers, vanilla BC 
fails drastically.
Hence, we consider using a robust estimator to replace the empirical average of NLL in \cref{equ:pi_BC} -- we first introduce the classical Median-of-Means (MOM) estimator for the mean estimation,
and then adapt it to dealing with loss functions in 
robust imitation learning problems.

The vanilla MOM estimator for one-dimensional mean estimation  works like following: 
(1) randomly partition $N$ samples into $M$ batches;
(2) calculates the mean for each batch;
(3) outputs the median of  these batch mean.

The MOM mean estimator achieves sub-Gaussian concentration bound for one-dimensional mean estimation even though the underlying  distribution only has second moment bound (heavy tailed distribution)
(interested readers are referred to textbooks such as \cite{MOM_nemirovsky1983problem, MOM_jerrum1986random, MOM_alon1999space}).
Very recently, MOM estimators are used 
for high dimensional robust regression \citep{brownlees2015empirical, hsu2016loss, lugosi2019risk, lecue2020robust, jalal2020robust} by applying MOM estimator on the \emph{loss function} of empirical risk minimization process.

\subsection{
Robust Behavior Cloning}

Inspired by the MOM estimator,
a natural robust version  of  \cref{equ:pi_BC} 
can randomly partition $N$
samples into $M$ batches
with the batch size $b$, and calculate
\begin{align}\label{equ:MOM_minimization}
\pih =  \arg \min_{\pi \in \Pi} \underset{1\leq j \leq M}{\mathrm{median}} 
\left( \ell_j(\pi) \right),
\end{align}
where the loss function $\ell_j(\pi)$ is the average Negative Log-Likelihood in the batch $B_j, j \in [M]$:
\begin{align} \label{equ:loss}
    \ell_j(\pi) = \frac{1}{b}\sum_{(s, a) \in B_j}-\log(\pi(a | s)).
\end{align}
Our idea \cref{equ:MOM_minimization} minimizes the MOM of NLL, which extends the MOM mean estimator to the loss function for robust imitation learning.
Although \cref{equ:MOM_minimization} can also achieve robust empirically result, we propose 
\Cref{def:RBC}
for theoretical convenience, which
optimizes the
min-max version (MOM tournament \cite{LeCam2012asymptotic, lugosi2019risk, lecue2020robust, jalal2020robust})
to handle arbitrary outliers in demonstration data set $(\obs, \acts) \in \Demo$. 

\begin{defi}[Robust Behavior Cloning]\label{def:RBC}
We split the corrupted demonstrations $\Demo$ into $M$ batches randomly\footnote{Without loss of generality, we assume that $M$ exactly divides the sample size $N$, and $b = \frac{N}{M}$ is the batch size.}: $\{B_j\}_{j=1}^{M}$, with the batch size $b \leq  \frac{1}{3 \epsilon}$.
The Robust Behavior Cloning solves the following optimization
\begin{align}\label{equ:pi_MOMBC}
    \pir =  \arg \min_{\pi \in \Pi} \max_{\pi' \in \Pi}   \underset{1\leq j \leq M}{\mathrm{median}} 
    \left( \ell_j(\pi) - \ell_j(\pi') \right).
\end{align} 
\end{defi}

The workhorse of \Cref{def:RBC} is \cref{equ:pi_MOMBC}, which uses a novel variant of MOM tournament procedure 
for imitation learning problems.

In \cref{equ:loss}, we calculate the average Negative Log-Likelihood (NLL) for a single batch of state-action pair $(s, a)$, and  
$\pir$ is the solution of a min-max formulation based on the batch loss $\ell_j(\pi)$ and $\ell_j(\pi')$. Though our algorithm minimizes the robust version of NLL, we do not utilize the traditional iid assumption in the supervised learning.

The \emph{key results} in our theoretical analysis 
show that the min-max solution to the  
median batch
of the loss function
is robust to a constant fraction of arbitrary outliers in the demonstrations.
The intuition behind solving this min-max formulation is that the inner variable $\pi'$ needs to get close to $\pis$ to maximize the term $\underset{1\leq j \leq M}{\mathrm{median}} 
    \left( \ell_j(\pi) - \ell_j(\pi') \right)$;
    and the outer variable
$\pi$ also need to get close to $\pis$ to minimize the term. 
In \Cref{sec:theory}, we show that under corrupted demonstrations,
$\pir$ will be close to $\pis$.
In particular, $\pir$ in \Cref{def:RBC} has the same error scaling and sample complexity compared to $\pih^{\rm BC}$ in the expert demonstrations setting.

\textbf{Algorithm design.}
In \Cref{sec:theory}, we provide rigorous statistical  guarantees for  \Cref{def:RBC}.
However,  the objective function \cref{equ:pi_MOMBC} is not convex (in general),
hence  we use \Cref{alg:RBC} as a computational heuristic to solve it.  

In each iteration of  \Cref{alg:RBC}, we randomly partition the demonstration data set $\Demo$ into $M$ batches, and calculate the loss  $\ell_j(\pi) - \ell_j(\pi')$  by \cref{equ:loss}. We then pick the batch $B_{\rm Med}$ with the median loss, and evaluate the gradient on that batch. We use gradient descent on $\pi$ for the $\arg \min$ part and gradient ascent on $\pi'$ for the  $\arg \max$ part.

In \Cref{sec:Experiments}, 
we empirically show that this gradient-based
heuristic \Cref{alg:RBC} is able to minimize this objective and has good convergence properties. As for the time complexity, when using back-propagation on one batch of samples, our RBC incurs  overhead costs compared to vanilla BC, in order to evaluate the loss function for all samples via forward propagation. Empirical studies in \Cref{sec:Exp_appendix} show that the time complexity of RBC is comparable to vanilla BC. 


\begin{algorithm}[t]
\begin{algorithmic}[1]
\STATE \textbf{Input:} 
{Corrupted demonstrations  $\Demo$}
\STATE \textbf{Output:} Robust policy $\pir$ \\
{\kern2pt \hrule \kern2pt}
\STATE Randomly initialize $\pi$ and $\pi'$ respectively.
\FOR {$t = 0$ to $T-1$,}
\STATE Randomly partition $\Demo$ to $M$ batches with the batch size $b \leq  \frac{1}{3 \epsilon}$. 
\STATE For each batch $j\in [M]$, calculate the loss  $\ell_j(\pi) - \ell_j(\pi')$  by \cref{equ:loss}.
\STATE Pick the batch with median loss within $M$ batches
$$ \underset{1\leq j \leq M}{\mathrm{median}} 
    \left( \ell_j(\pi) - \ell_j(\pi') \right),$$
and evaluate the gradient for $\pi$ and $\pi'$
using back-propagation on that batch\\
(i) perform gradient descent on $\pi$. \\
(ii) perform gradient ascent on $\pi'$.
\ENDFOR
\STATE \textbf{Return:} Robust policy $\pir = \pi$.
\end{algorithmic}
\caption{Robust Behavior Cloning.}
\label{alg:RBC}
\end{algorithm}

\section{Theoretical Analysis}
\label{sec:theory}

In this section, we provide theoretical guarantees for our RBC algorithm. Since our method (\Cref{def:RBC}) directly estimates the conditional probability $\pi(a | s)$ over the offline demonstrations, our theoretical analysis provides guarantees on $\Expe_{s \sim \dist} \normtv{ \pir(\cdot|\obs) - \pis(\cdot|\obs)}^2$, which upper bounds the total variation norm compared to $\pis$ under the expectation of $s \sim \dist$. 
The ultimate goal of the learned policy is to maximize the expected cumulative return, thus we then provide an upper bound for the sub-optimality $J_{\pis} - J_{\pir}$.

We begin the theoretical analysis by \Cref{ass:discrete}, which simplifies our analysis and is common in literature \citep{Agarwal2019AJKS, Agarwal2020MLE}. By assuming that the policy class $\Pi$ is discrete, our upper bounds depend on the quantity ${\log(|\Pi|)} / {N}$, which matches the error rates and sample complexity for using BC with expert demonstrations \citep{Agarwal2019AJKS, Agarwal2020MLE}.

\begin{assm}\label{ass:discrete}
We assume that the policy class
$\Pi$ is discrete, and realizable, i.e., $\pis \in \Pi$.
\end{assm}

\subsection{The upper bound for the policy distance}
We first present \Cref{thm:pi_bound}, which shows that minimizing the MOM objective via \cref{equ:pi_MOMBC} guarantees the closeness of robust policy to optimal policy in total variation distance.

\begin{theo}\label{thm:pi_bound}\
Suppose we have corrupted demonstration data set  $\Demo$ with sample size $N$ from 
\Cref{def:Huber}, and there exists a constant corruption ratio $\epsilon < 0.5$.
Under \Cref{ass:discrete}, 
let $\tau$ to be the output objective value with $\pir$ in the optimization \cref{equ:pi_MOMBC} with the batch size $b \leq  \frac{1}{3 \epsilon}$, then 
with probability at least $1-c_1\delta$,
we have
\begin{align}\label{equ:pi_bound}
    \Expe_{s \sim \dist} \normtv{ \pir - \pis}^2 = O \left( \frac{\log(|\Pi|/\delta)}{N} + \tau \right).
\end{align}
\end{theo}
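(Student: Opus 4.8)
# Proof Proposal for Theorem \ref{thm:pi_bound}

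My plan is to reduce the total-variation bound to a statement about the population negative log-likelihood gap, and then control that gap via a median-of-means concentration argument that tolerates the $\epsilon$-fraction of corruptions. The key classical fact I would invoke is the standard MLE-to-TV conversion: for any policy $\pi$, the population excess NLL relative to $\pis$ controls the squared Hellinger (hence TV) distance, i.e.
\begin{align}\label{equ:hellinger}
\Expe_{s\sim\dist}\normtv{\pi(\cdot|s)-\pis(\cdot|s)}^2 \;\lesssim\; \Expe_{s\sim\dist}\Expe_{a\sim\pis(\cdot|s)}\big[\log\pis(a|s)-\log\pi(a|s)\big] \;=:\; L(\pi).
\end{align}
So it suffices to show that $L(\pir) = O\big(\log(|\Pi|/\delta)/N + \tau\big)$. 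Here realizability ($\pis\in\Pi$) from \Cref{ass:discrete} is essential: it guarantees $L(\pis)=0$ and that $\pis$ is a feasible choice for the inner $\max$.

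First I would set up the median-of-means machinery on the \emph{uncorrupted} batches. Since the batch size satisfies $b\le \frac{1}{3\epsilon}$, at most an $\epsilon$-fraction of the $N$ samples are corrupted, so the number of batches containing at least one corrupted point is at most $\epsilon N \le \frac{N}{3b} = \frac{M}{3}$; hence at least $\frac{2M}{3}$ batches are ``clean.'' The standard MOM argument is then: if for a given pair $(\pi,\pi')$ more than half of the \emph{clean} batch statistics $\ell_j(\pi)-\ell_j(\pi')$ lie below some threshold, the median over all $M$ batches also lies below it, and symmetrically from above. For a clean batch, $\ell_j(\pi)-\ell_j(\pi')$ is an average of $b$ i.i.d. (within the batch) terms with mean equal to the population gap; a Chebyshev/Bernstein bound combined with Markov over the batch index gives, with the choice of $b\asymp 1/\epsilon$, that a large majority of clean batch statistics concentrate within $O\big(\sqrt{\mathrm{Var}/(bM)} + \text{bias}\big) = O\big(\sqrt{\mathrm{Var}/N}\big)$ of the population gap. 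Taking a union bound over the (finite, by discreteness) set $\Pi\times\Pi$ introduces the $\log(|\Pi|/\delta)$ factor; a self-bounding / variance-to-mean argument for log-likelihood ratios (the usual trick that $\mathrm{Var}$ of the NLL gap is controlled by its mean plus a constant, uniformly over a bounded class) upgrades the $\sqrt{\cdot/N}$ rate to the fast rate $\log(|\Pi|/\delta)/N$.

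Next I would chain the optimality of $\pir$. Because $\pir$ achieves objective value $\tau$, for \emph{every} $\pi'$ we have $\mathrm{median}_j\big(\ell_j(\pir)-\ell_j(\pi')\big)\le\tau$; take $\pi'=\pis$ and use the MOM concentration (downward direction) to conclude $L(\pir) - L(\pis) \le \tau + O(\log(|\Pi|/\delta)/N)$, i.e. $L(\pir)\lesssim \tau + \log(|\Pi|/\delta)/N$ since $L(\pis)=0$. Conversely the $\max$ over $\pi'$ evaluated at $\pir$ is at least $\mathrm{median}_j\big(\ell_j(\pir)-\ell_j(\pir)\big)=0$, which is what pins $\tau\ge 0$ and makes the bound meaningful; and the min-max structure is exactly what lets the clean-batch concentration survive the adversarial batches — the adversary can corrupt at most $M/3$ medians in either direction, never enough to move the median past the clean majority. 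Plugging this into \eqref{equ:hellinger} yields \eqref{equ:pi_bound}.

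The main obstacle I anticipate is the uniform concentration step: making the median-of-means deviation bound hold \emph{simultaneously} over all $\pi,\pi'\in\Pi$ at the fast $1/N$ rate, while correctly accounting for the corrupted batches. The delicate points are (i) verifying the variance-to-mean (Bernstein-type) inequality for the NLL differences so the rate is $\log(|\Pi|/\delta)/N$ rather than $\sqrt{\log(|\Pi|/\delta)/N}$ — this typically needs a boundedness or sub-exponential condition on $\log\pi(a|s)$ within $\Pi$, or a truncation argument — and (ii) being careful that the ``clean batch'' count $\ge 2M/3$ leaves enough margin (a factor strictly above $1/2$) to absorb the small-probability failures from the union bound. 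The temporal dependence flagged in the text is a non-issue here because conditioning on $s\sim\dist$ and $a\sim\pis(\cdot|s)$ makes each sampled state–action pair marginally distributed as $\dist$; the batch averages over clean samples can be treated via the marginal distribution, so the i.i.d.-style concentration goes through at the level of the population functional $L(\cdot)$.
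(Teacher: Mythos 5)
Your high-level architecture matches the paper's: exploit that $b\le\frac{1}{3\epsilon}$ forces at least a $2/3$ fraction of clean batches, plug $\pi'=\pis$ into the inner maximization so that the median objective value $\tau$ upper-bounds a clean-batch likelihood-ratio statistic, and then convert a population likelihood-ratio gap into a squared TV bound. But there is a genuine gap in the step you yourself flag as the main obstacle, and it is not a technicality: to get the fast rate $\log(|\Pi|/\delta)/N$ you propose a Chebyshev/Bernstein bound on the batch averages of $\log\pis(a|s)-\log\pi(a|s)$ together with a variance-to-mean self-bounding inequality, and you concede this needs boundedness or sub-exponential control of $\log\pi(a|s)$ over $\Pi$. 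No such assumption appears in the theorem (\Cref{ass:discrete} only gives discreteness and realizability), and log-likelihood ratios are in general heavy-tailed even when the KL is small, so this route cannot be closed as stated. The paper sidesteps the issue entirely by never working with the KL/Pinsker functional: it controls $\Expe_{s\sim\dist}\normtv{\pir-\pis}^2$ via the quantity $-2\log\Expe\exp\bigl(-\tfrac12\log\frac{\pis}{\pi}\bigr)$ (Lemma A.2, i.e.\ the Hellinger affinity), whose empirical exponential moment is \emph{automatically} bounded by $1$ because $\Expe_{y\sim f^*}\sqrt{f/f^*}=\int\sqrt{ff^*}\le 1$. Combined with the Chernoff bound this yields the fast rate with no boundedness, variance, or truncation argument. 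That factor of $\tfrac12$ in the exponent is the key idea your proposal is missing.

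A secondary issue: you dismiss the temporal dependence on the grounds that each pair is marginally distributed as $\dist$, but correct marginals do not give concentration of within-batch averages of dependent terms. The paper handles this with the tangent-sequence decoupling lemma (Lemma A.1), which bounds the conditional exponential moments term by term along the filtration and simultaneously supplies the $\log|\Pi|$ union-bound factor. If you replace your Pinsker-plus-Bernstein step with the half-log-likelihood exponential-moment argument applied per clean batch, and note that the median value is sandwiched by clean-batch values (so at least one clean batch has statistic at most $\tau$ and satisfies the concentration inequality), your outline becomes essentially the paper's proof.
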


The proof is collected in \Cref{sec:proof}. We note that the data collection process does not follow the iid assumption, hence we use martingale analysis similar to 
 \citep{Agarwal2019AJKS, Agarwal2020MLE}.
The first part of \cref{equ:pi_bound} is the statistical error $\frac{\log(|\Pi|/\delta)}{N}$, 
which matches the error rates of vanilla BC for expert demonstrations \citep{Agarwal2019AJKS, Agarwal2020MLE}. 
The second part is the final objective value in the optimization \cref{equ:pi_MOMBC} $\tau$ which includes two parts -- the first part scales with $O(\frac{1}{b})$, which is equivalent to the fraction of corruption 
$O(\epsilon)$. 
The second part is the
sub-optimality gap 
due to the solving the non-convex optimization.
Our main theorem -- \Cref{thm:pi_bound} -- guarantees that a small value of the final objective implies an accurate estimation of policy and hence we can certify estimation quality using the obtained final value of the objective.

\subsection{The upper bound for the sub-optimality }
Next, we present \Cref{thm:V_bound}, which guarantees the reward performance of the learned robust policy $\pir$. 

\begin{theo}\label{thm:V_bound}
Under the same setting as \Cref{thm:pi_bound}, we have
\begin{align}\label{equ:V_bound}
    J_{\pis} - J_{\pir} \leq 
  O \left( \frac{1}{(1 - \gamma)^2} 
   \sqrt{\frac{\log(|\Pi|/\delta)}{N} + \tau } \right),
\end{align}
with probability at least $1-c_1\delta$.
\end{theo}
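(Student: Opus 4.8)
The plan is to derive \Cref{thm:V_bound} from \Cref{thm:pi_bound} by a standard performance-difference argument that converts a bound on the total-variation distance between the learned conditional policy $\pir(\cdot|\obs)$ and the expert conditional policy $\pis(\cdot|\obs)$, taken in expectation over the expert visitation distribution $\dist = \rho_{\pis}$, into a bound on the sub-optimality $J_{\pis} - J_{\pir}$. The key algebraic identity I would invoke is the performance-difference lemma, which writes $J_{\pis} - J_{\pir}$ as $\frac{1}{1-\gamma}\,\Expe_{\obs \sim \rho_{\pir}}\Expe_{\acts \sim \pis(\cdot|\obs)}\big[A^{\pir}(\obs,\acts)\big]$ (or the symmetric version with roles swapped); since advantages are bounded by $\mathsf{R}_{\max}/(1-\gamma)$, each ``policy disagreement'' event at a visited state costs at most $O(1/(1-\gamma))$ in value, giving a preliminary bound of the form $J_{\pis}-J_{\pir} = O\!\big(\frac{1}{(1-\gamma)^2}\big)\cdot \Expe_{\obs}\normtv{\pir(\cdot|\obs)-\pis(\cdot|\obs)}$ — note the two factors of $(1-\gamma)^{-1}$, one from the discounted horizon in the performance-difference lemma and one from the magnitude of the value function, which is exactly the dependence appearing in \cref{equ:V_bound}.

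The technical subtlety — and the step I expect to be the main obstacle — is a distribution-mismatch issue: \Cref{thm:pi_bound} controls $\Expe_{\obs\sim\rho_{\pis}}\normtv{\pir-\pis}^2$ under the \emph{expert} visitation distribution, whereas the natural form of the performance-difference lemma integrates the TV-distance against the visitation distribution of one of the two policies in a way that, after telescoping, can be rearranged to be measured under $\rho_{\pis}$. I would handle this by using the variant of the performance-difference / simulation argument that expresses the value gap as a sum over time steps $t$ of $\gamma^t$ times the probability that the trajectories of $\pis$ and $\pir$ have agreed up to time $t$ but disagree at time $t$; unrolling this and using that the expert rollout distribution at each step is dominated (up to the $(1-\gamma)$ normalization) by $\rho_{\pis}$, I get $J_{\pis}-J_{\pir} \le \frac{\mathsf{R}_{\max}}{(1-\gamma)^2}\,\Expe_{\obs\sim\rho_{\pis}}\normtv{\pir(\cdot|\obs)-\pis(\cdot|\obs)}$. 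This is the same reduction used in the BC analyses of \citep{Agarwal2019AJKS, Agarwal2020MLE}, so I would mirror their argument.

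Finally, to pass from the first moment of the TV-distance to the squared expectation controlled by \Cref{thm:pi_bound}, I apply Jensen's inequality:
\begin{align}
\Expe_{\obs\sim\rho_{\pis}}\normtv{\pir(\cdot|\obs)-\pis(\cdot|\obs)} \le \sqrt{\Expe_{\obs\sim\rho_{\pis}}\normtv{\pir(\cdot|\obs)-\pis(\cdot|\obs)}^2}.
\end{align}
Combining this with the bound $\Expe_{\obs\sim\rho_{\pis}}\normtv{\pir-\pis}^2 = O\!\big(\frac{\log(|\Pi|/\delta)}{N} + \tau\big)$ from \Cref{thm:pi_bound}, which holds with probability at least $1-c_1\delta$, and absorbing the constant $\mathsf{R}_{\max}$ into the big-$O$, yields
\begin{align}
J_{\pis}-J_{\pir} \le O\!\left(\frac{1}{(1-\gamma)^2}\sqrt{\frac{\log(|\Pi|/\delta)}{N}+\tau}\right)
\end{align}
on the same high-probability event, which is exactly \cref{equ:V_bound}. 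The only places needing care are making the performance-difference step rigorous under the expert visitation distribution (the obstacle noted above) and tracking that no extra powers of $(1-\gamma)^{-1}$ sneak in beyond the two that are claimed; everything else is Jensen plus a direct appeal to \Cref{thm:pi_bound}.
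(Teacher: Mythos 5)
Your proposal is correct and follows essentially the same route as the paper: performance-difference lemma, the uniform bound $\abs{A^{\pir}(\obs,\acts)}\le \frac{1}{1-\gamma}$, a conversion of the policy disagreement into $\normtv{\pir(\cdot|\obs)-\pis(\cdot|\obs)}$, Jensen's inequality to pass to the squared TV-expectation, and then \Cref{thm:pi_bound}. The only remark is that the ``distribution-mismatch obstacle'' you anticipate does not actually arise: the performance-difference identity for $J_{\pis}-J_{\pir}$ with the advantage $A^{\pir}$ already takes the state expectation under the expert visitation distribution, i.e.\ $(1-\gamma)(J_{\pis}-J_{\pir})=\Expe_{\obs\sim\dist}\Expe_{\acts\sim\pis(\cdot|\obs)}A^{\pir}(\obs,\acts)$, which is exactly the distribution controlled by \Cref{thm:pi_bound}, so no telescoping or domination argument is needed.
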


The proof is collected in \Cref{sec:proof}.  
We note that the error scaling and sample complexity of the statistical error ${\log(|\Pi|/\delta)}/{N}$ 
in \Cref{thm:V_bound} match the vanilla BC with expert demonstrations \citep{Agarwal2019AJKS, Agarwal2020MLE}.

\begin{remark}\label{remark:shift}
The quadratic dependency on the effective horizon ($\frac{1}{(1 - \gamma)^2}$ in the discounted setting or $H^2$ in the episodic setting) is widely known as the compounding error or distribution shift in literature, which is due to the essential limitation of offline imitation learning setting. Recent work \citep{Jiao2020FundamentalLimits, YuYang2021error} shows that this quadratic dependency cannot be improved without any further interaction with the environment or the knowledge of transition dynamic $\Tran$. Hence BC is actually optimal under no-interaction setting.
Also, a line of research considers improving BC by further 
online interaction with the environment or even active query of the experts \citep{Ross2011AIStats, brantley_SunWen_2019disagreement, ross2014reinforcement}. 
Since our work, as a robust counterpart of BC,  focuses on  the robustness to the corruptions in the offline demonstrations setting, it  can be naturally used in  the online setting such as DAGGER  \citep{Ross2011AIStats} and \cite{brantley_SunWen_2019disagreement}.
\end{remark}



\section{Experiments}
\label{sec:Experiments}






In this section, we study the empirical performance of our Robust Behavior Cloning. We evaluate the robustness of Robust Behavior Cloning on several continuous control benchmarks simulated by PyBullet \cite{coumans2016pybullet} simulator: HopperBulletEnv-v0, 
Walker2DBulletEnv-v0,
HalfCheetahBulletEnv-v0
and AntBulletEnv-v0.
Actually, these tasks have true reward function already in the simulator. We will use \emph{only} state observation and action for the imitation algorithm, and we then use the reward to evaluate the obtained policy when running in the simulator.

\subsection{Experimental setup}
For each task, we collect the presumably optimal expert trajectories using pre-trained agents from Standard Baselines3\footnote{The pre-trained agents were cloned from the following repositories: \url{https://github.com/DLR-RM/stable-baselines3}, \url{https://github.com/DLR-RM/rl-baselines3-zoo}.}. 
In the experiment, we use Soft Actor-Critic  \cite{haarnoja2018SAC} in the Standard Baselines3 pre-trained agents, and we consider it to be an expert.
We provide the  hyperparameters setting in  \Cref{sec:Exp_appendix}.

For the continuous control environments, the action space are  bounded between -1 and 1.
We note that \Cref{def:Huber} allows for \emph{arbitrary} corruptions, and we choose these outliers' action such that it has the maximum effect, and cannot be easily detected. 
We generate corrupted demonstration data set $\Demo$ as follows: we first  randomly choose $\epsilon$ fraction of samples, and corrupt the actions.
Then, for the option (1), we set the actions of outliers to the boundary ($-1$ or $+1$). 
For the  option (2),  the actions of outliers are drawn from a 
\emph{uniform distribution} between $-1$ and $+1$.

We compare our RBC algorithm (\Cref{alg:RBC}) to a number of natural baselines: the first baseline
is directly using BC on the corrupted demonstration $\Demo$ without any robustness consideration.
The second one is using BC on the \emph{expert demonstrations} (which is equivalent to $\epsilon = 0$ in our corrupted demonstrations) with the same sample size.  

We also investigate the empirical performance of the baseline which achieves the state-of-the-art performance: Behavior Cloning from Noisy Observation (Noisy BC).
Noisy BC is a recent offline imitation learning algorithm
proposed by \cite{NoisyBC2020},
which  achieves superior performance compared to \cite{wu2019imperfect,brown2019TREX, brantley_SunWen_2019disagreement}. Similar to our RBC,
Noisy BC does not require any environment interactions during training or
any screening/annotation processes to discard the non-optimal behaviors in the demonstration data set.

The Noisy BC works in an iterative fashion:
in each iteration, it reuses the old policy  iterate $\pi^{\rm old}$ to re-weight the state-action samples via
the weighted Negative Log-Likelihood
\begin{align*}
    \pi^{\rm new} = \arg \min_{\pi \in \Pi} \frac{1}{ N}  \sum_{(s, a) \in 
    \Demo } -\log(\pi(a | s)) * \pi^{\rm old}(a | s).
\end{align*} 
Intuitively, if the likelihood $\pi^{\rm old}(a | s)$ is small in previous iteration, the weight for the state-action sample $(s, a)$ will be small in the current iteration. Noisy BC outputs $\pih$ after multiple iterations.

\subsection{Convergence of our algorithm
}
We  first illustrate the convergence and the performance of our algorithm  by 
tracking the metric of different algorithms vs. epoch number in the whole training process. 
More specifically,
we evaluate current policy in the simulator for 20 trials, and obtain the mean and standard deviation of cumulative reward for every 5 epochs.
 This metric corresponds to theoretical bounds in \Cref{thm:V_bound}.

We focus on four continuous control environments, where the observation space has dimensions around 30, and 
the action space has boundary $[-1, 1]$.
In this experiment, we adopt option (1), which set the actions of outliers to the boundary ($-1$ or $+1$).
We fix the corruption ratio as 10\% and 20\%, and present the Reward vs. Epochs.
Due to the space limitation, we leave the experiments for all the environments to \Cref{fig:epochs_appendix} in \Cref{sec:Exp_appendix}.

As illustrated in \Cref{fig:epochs_appendix}, 
Vanilla BC on corrupted demonstrations fails to converge to expert policy. 
Using our robust counterpart \Cref{alg:RBC} on corrupted demonstrations has good convergence properties. Surprisingly, our RBC on corrupted demonstrations has nearly the same reward performance vs. epochs of directly using
BC on \emph{expert demonstrations}.

\textbf{Computational consideration.} 
Another important aspect of our algorithm is the computational efficiency.
To directly compare the time complexity, we report the reward vs. wall clock time performance of our RBC and ``Oracle BC'', which optimizes on the expert demonstrations.
The experiments are conducted on 1/2 core of NVIDIA T4 GPU, and we leave the results to \Cref{tab:time} in \Cref{sec:Exp_appendix} due to space limitations.
When using back-propagation on  batches of samples,
our RBC incurs overhead costs compared to vanilla BC,
in order to evaluate the loss function for all samples via
forward propagation. \Cref{tab:time} shows that
the actual running time time of RBC is comparable to vanilla BC.

\begin{figure*}[t]
\centering
\begin{subfigure}{\linewidth}
  \centering
  \includegraphics[width=.425\linewidth]{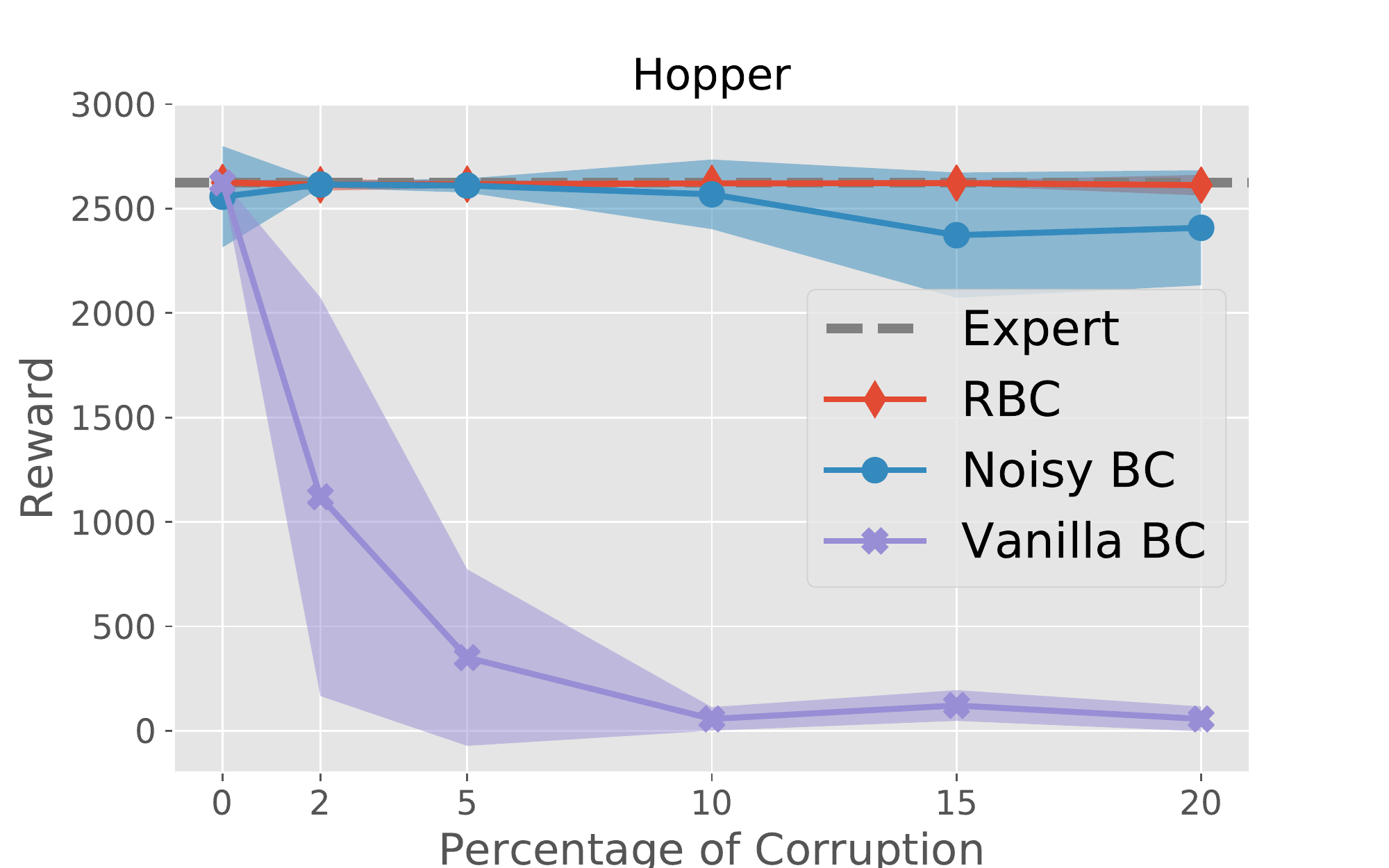}
\includegraphics[width=.425\linewidth]{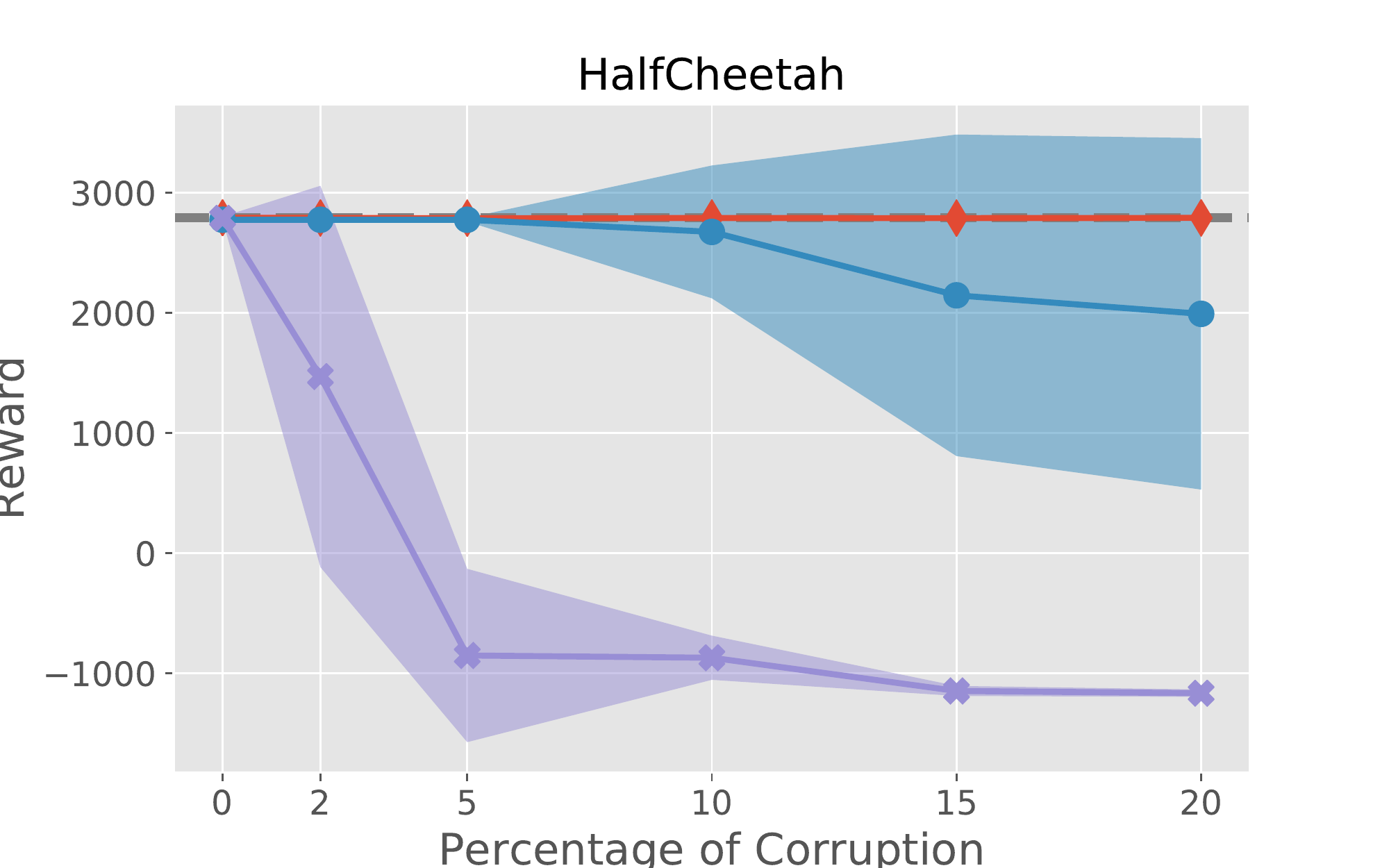}
\includegraphics[width=.425\linewidth]{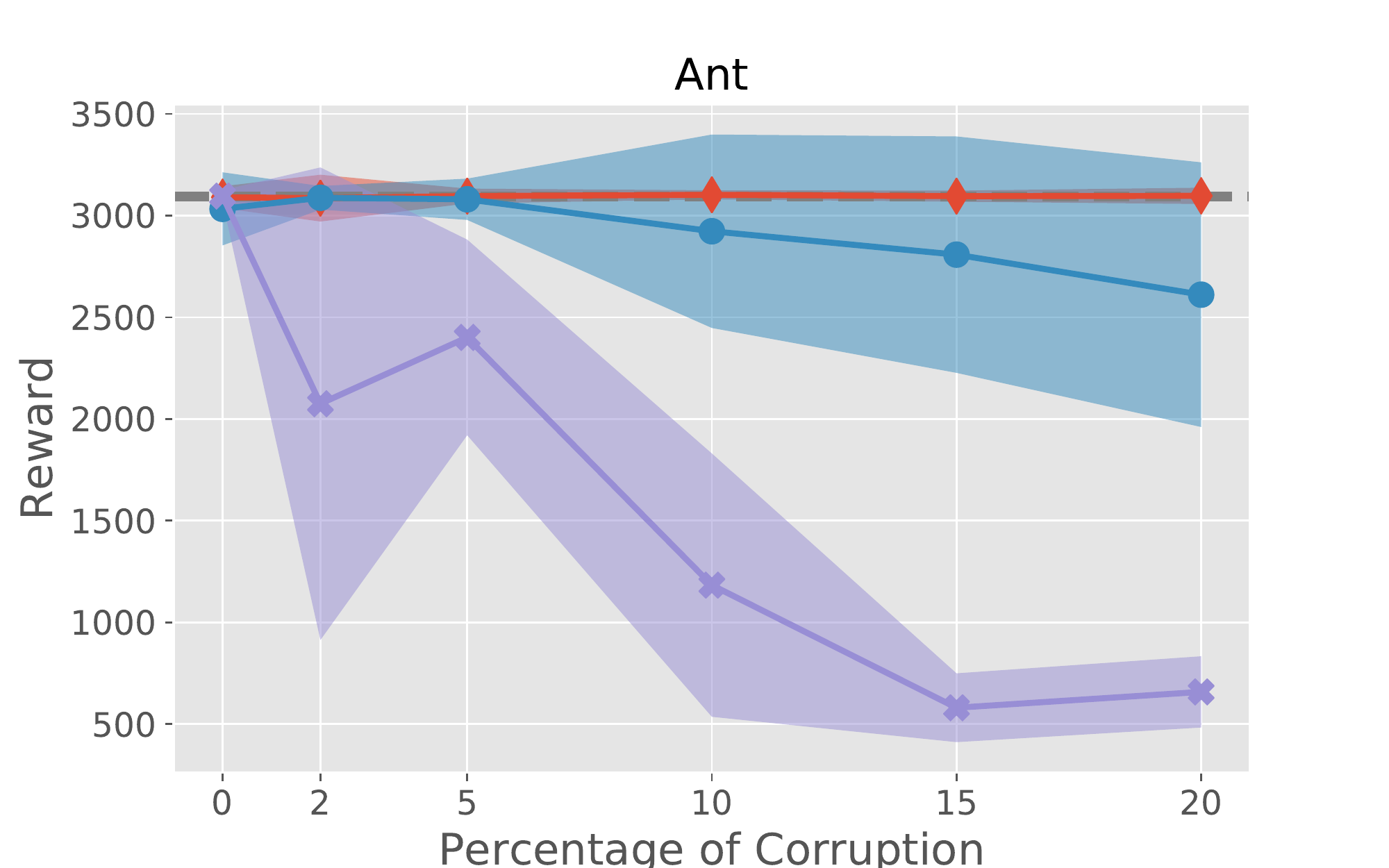}
\includegraphics[width=.425\linewidth]{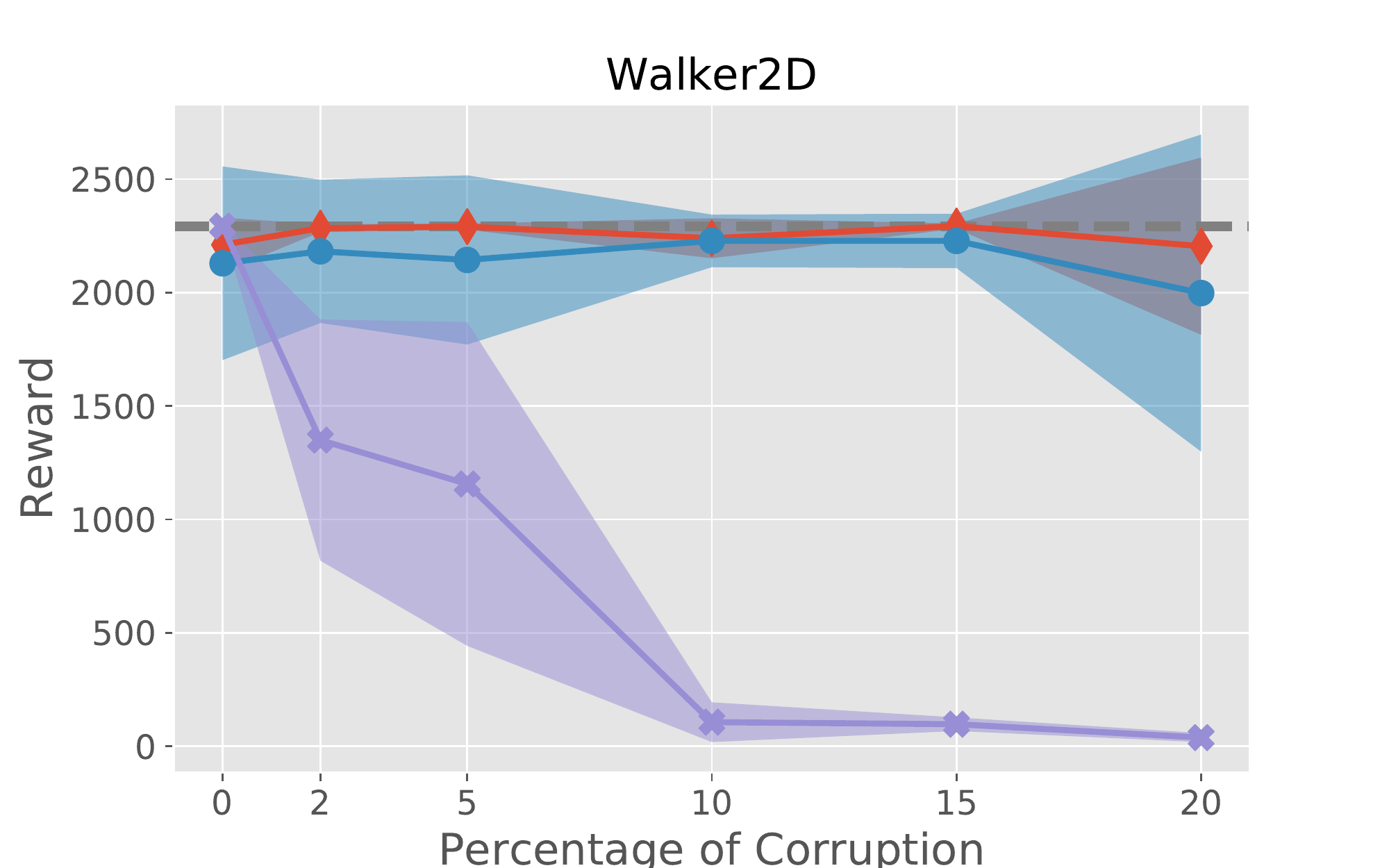}
\caption{The boundary case.}
\end{subfigure}%

\begin{subfigure}{\linewidth}
  \centering
  \includegraphics[width=.425\linewidth]{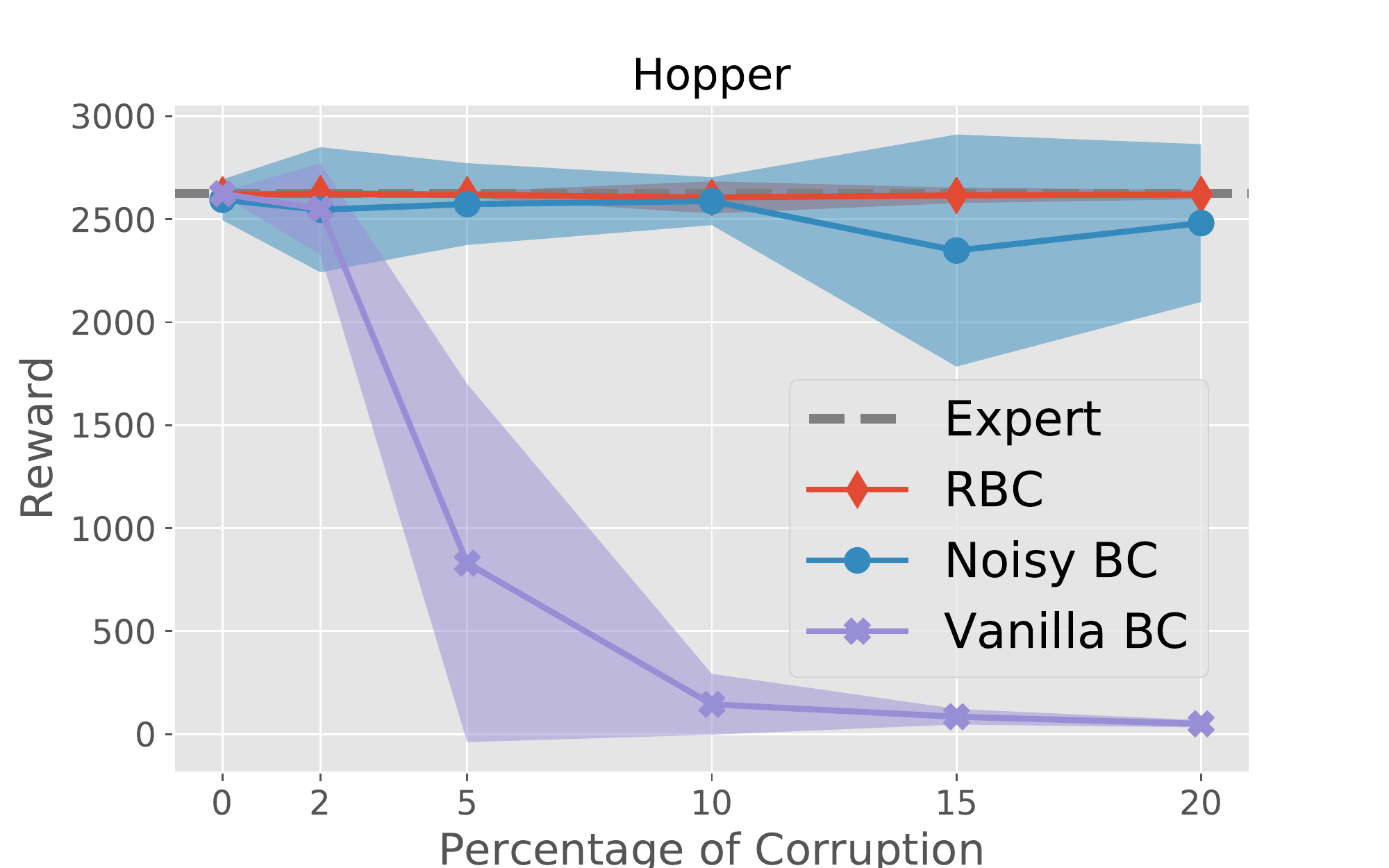}
\includegraphics[width=.425\linewidth]{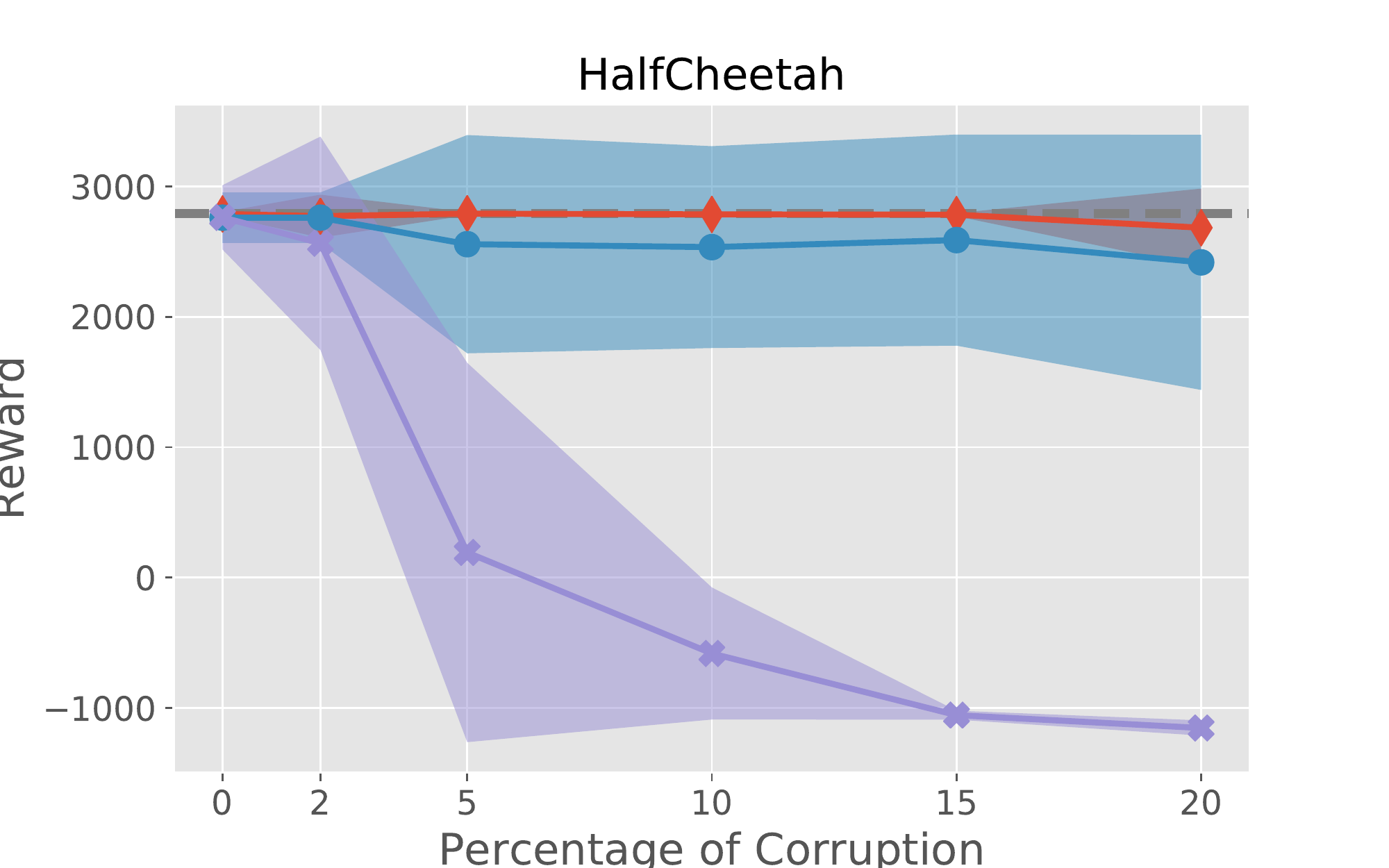}
\includegraphics[width=.425\linewidth]{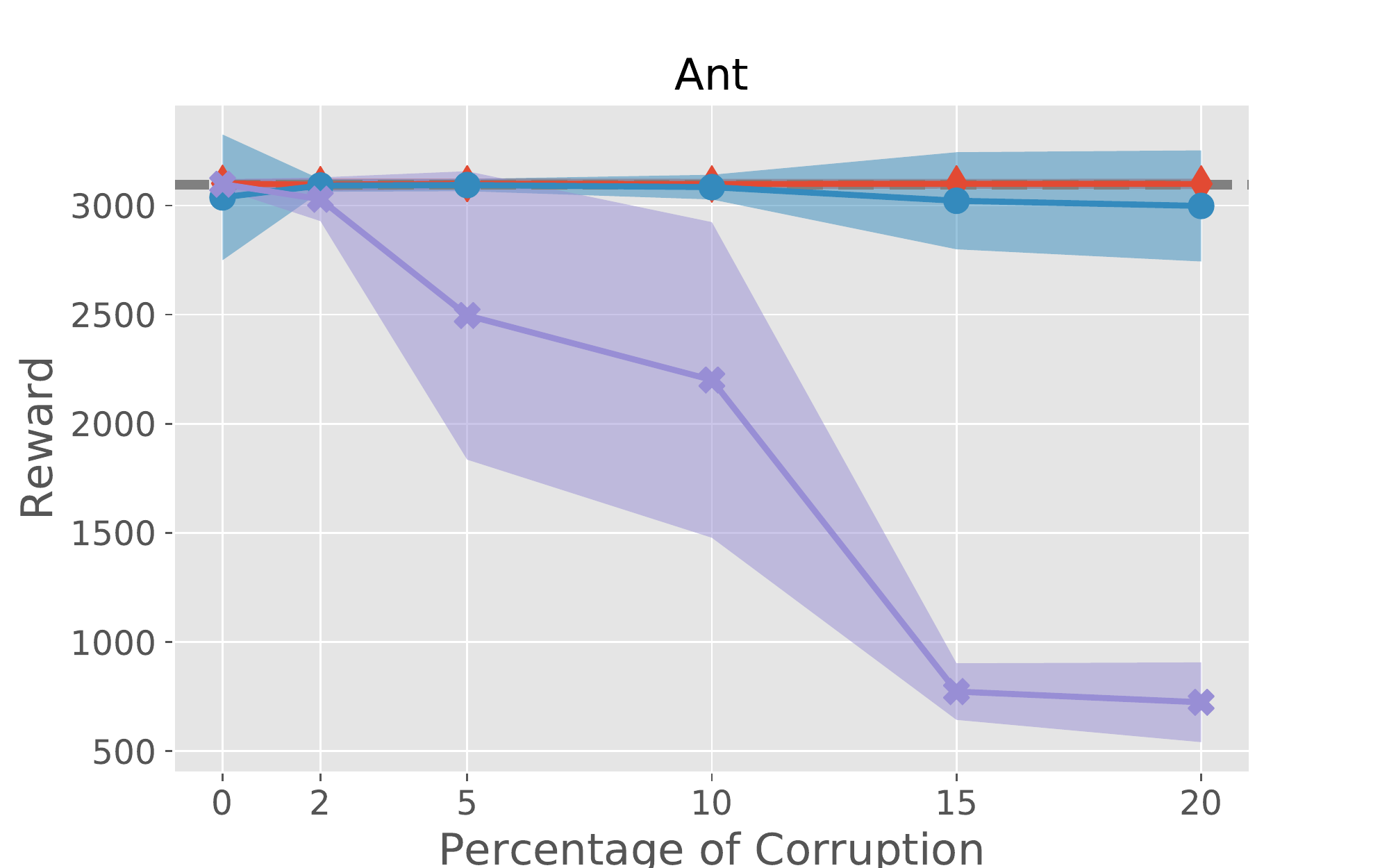}
\includegraphics[width=.425\linewidth]{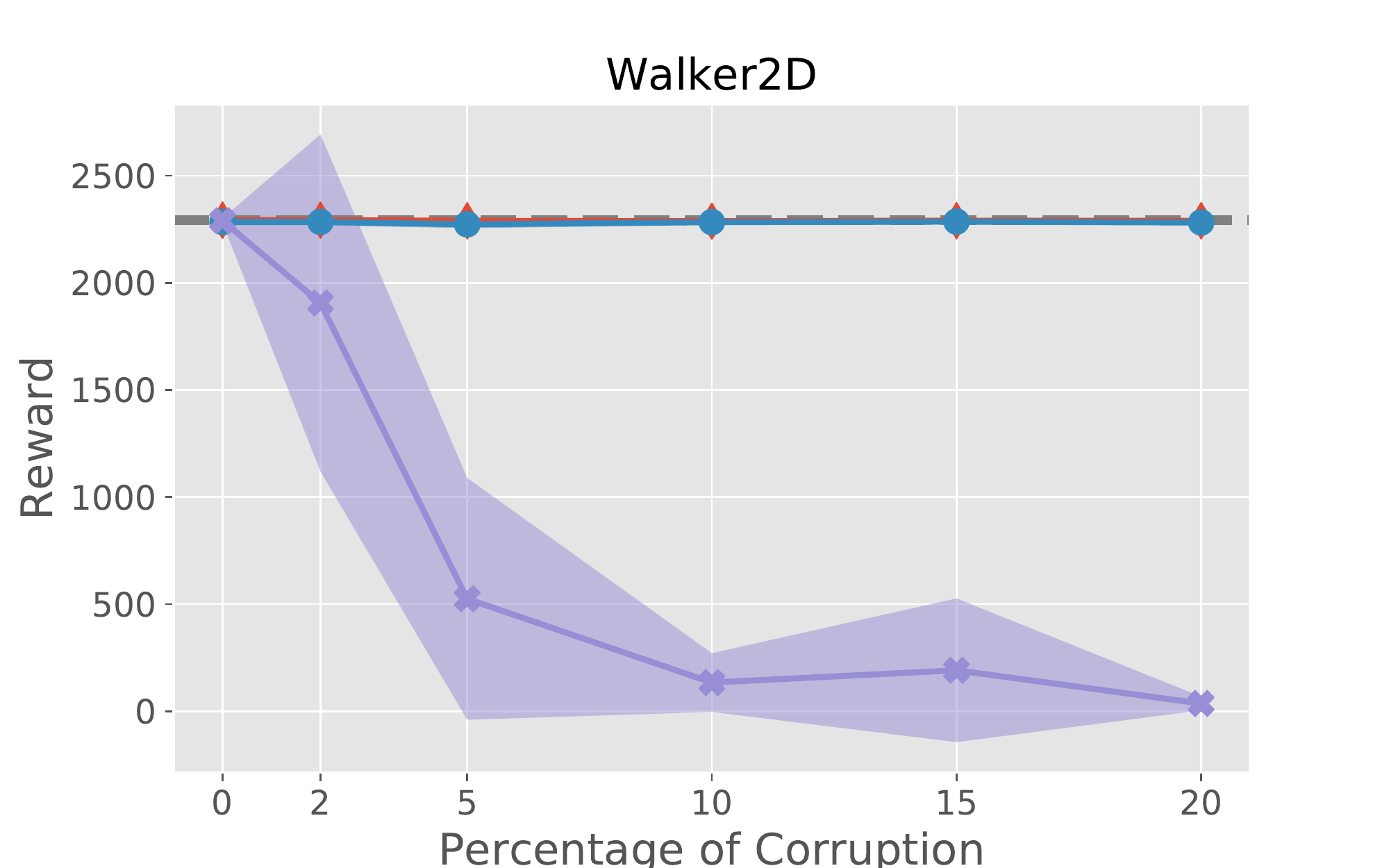}
\caption{The uniform case.}
\end{subfigure}%
\caption{Reward vs. Percentage of Corruption $\epsilon$ 
with demonstration data of size 60,000.
We vary the corruption ration  $\epsilon$ from  0\% to  20\%.
In (a), the outliers are set randomly at the boundary -1 or 1.
In (b), the outliers are randomly drawn from a uniform distribution between $[-1, 1]$.
For different algorithms, we report their reward performance after training with 200 epochs.
The shaded region represents the standard deviation for 20 trials. Vanilla BC (purple line) on corrupted
demonstrations fails drastically. 
Surprisingly, our RBC (red line) on corrupted demonstrations has nearly the same  performance as the expert (grey dashed line), 
and achieves competitive results compared to the state-of-the-art robust imitation learning method (blue line) \cite{NoisyBC2020}.}
\label{fig:curve}
\end{figure*}

\subsection{Experiments under different setups}

In this subsection, we compare the performance of our algorithm and existing methods under different setups. 

\textbf{Different fraction of corruption.} 
We fix the sample size as 60,000,  and vary
the corruption fraction $\epsilon$ from 0\% to 20\%. 
\Cref{fig:Curve_Hopper} has shown  that our RBC is resilient to outliers in the demonstrations ranging from 0 to 20\% for the Hopper environment.
In \Cref{fig:curve},
the full experiments 
validate our theory that
our Robust Behavior Cloning nearly matches the expert performance for different environments and corruption ratio.
By contrast, the performance of vanilla BC on corrupted demos fails drastically.

\textbf{Different sample size in $\Demo$.}
In this experiment, we fix the fraction of the corruption $\epsilon = 15\%$, 
set the actions of outliers to the boundary ($-1$ or $+1$),
and
vary the sample size of the demonstration data set. It is expected in \Cref{thm:V_bound} that larger sample
size of $\Demo$ leads to smaller suboptimality gap in value function. 
\Cref{fig:curve_size} validates our theory: Our
RBC on corrupted demonstrations has nearly the same reward as Oracle BC (BC on expert demonstrations), and
the sub-optimality gap gets smaller as sample size $N$ grows larger. However, directly using BC on corrupted
demonstrations cannot improve as sample size grows larger.

The Oracle BC on expert demonstrations is a strong baseline which achieves reward
performance at expert level with very few transition samples in demonstrations. This suggests that
compounding error or distribution shift may be less of a problem in these environments. This is
consistent with the findings in \cite{brantley_SunWen_2019disagreement}.

\begin{figure*}[t]
\centering
\begin{subfigure}{\linewidth}
  \centering
  \includegraphics[width=.45\linewidth]{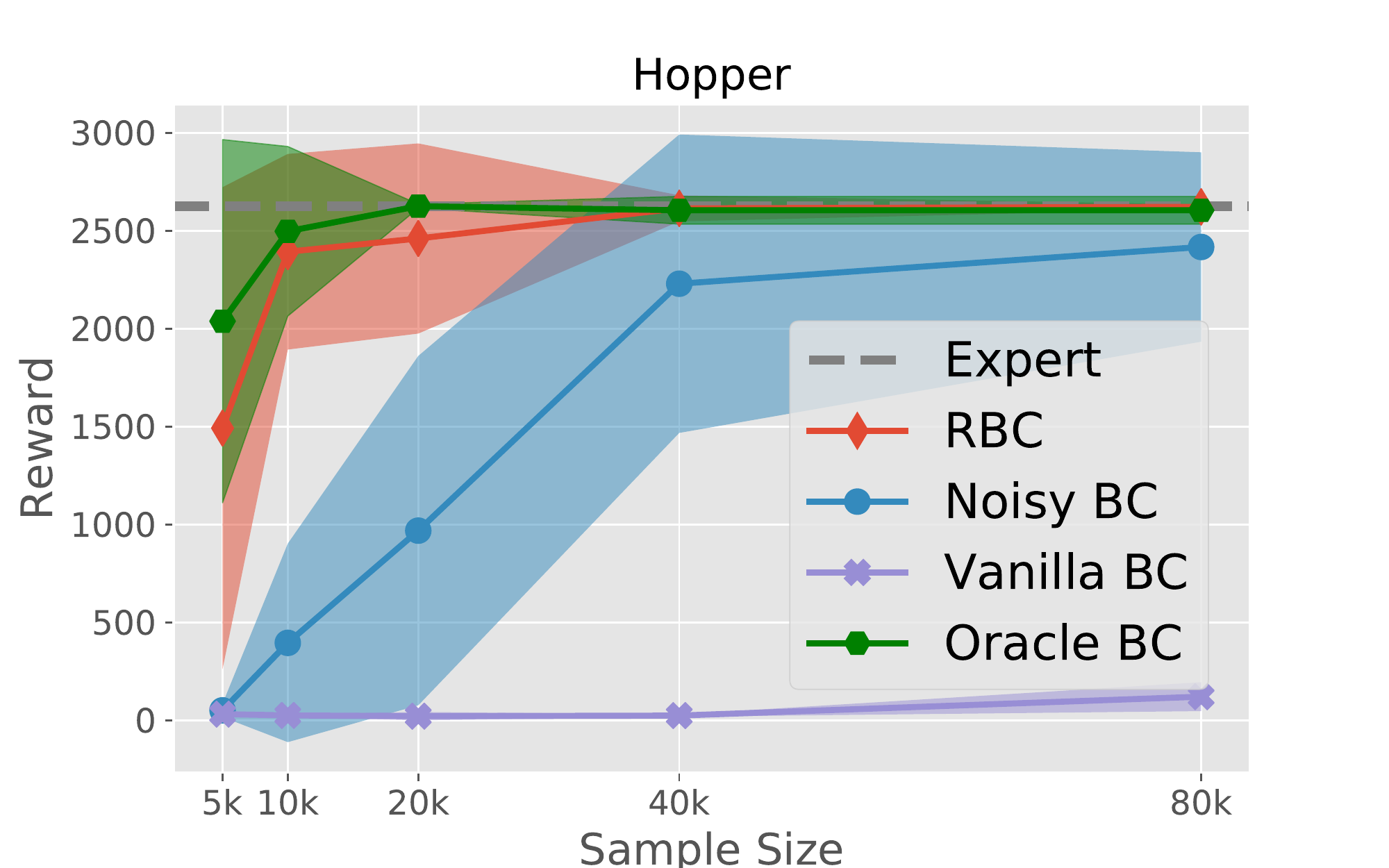}
\includegraphics[width=.45\linewidth]{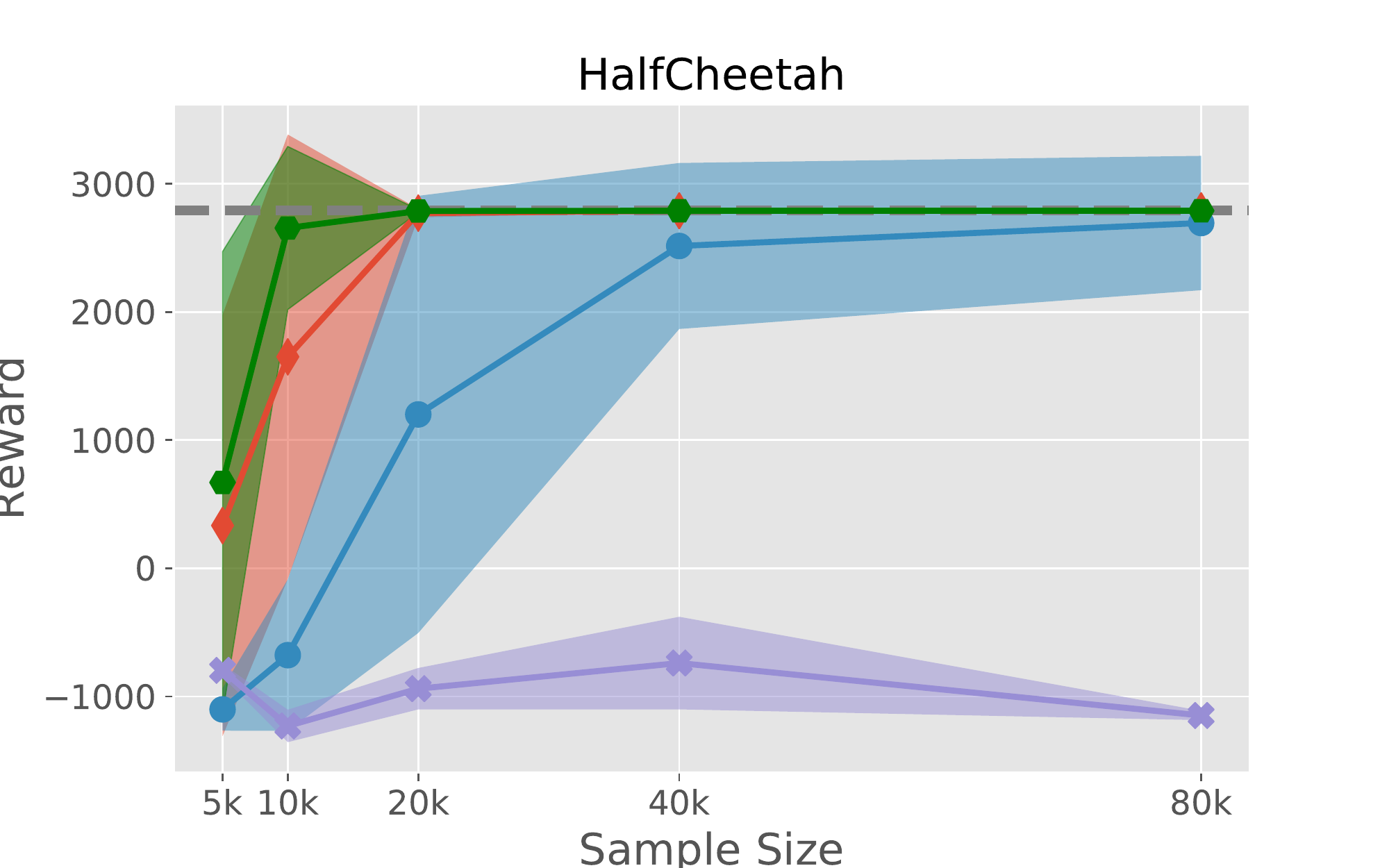}
\includegraphics[width=.45\linewidth]{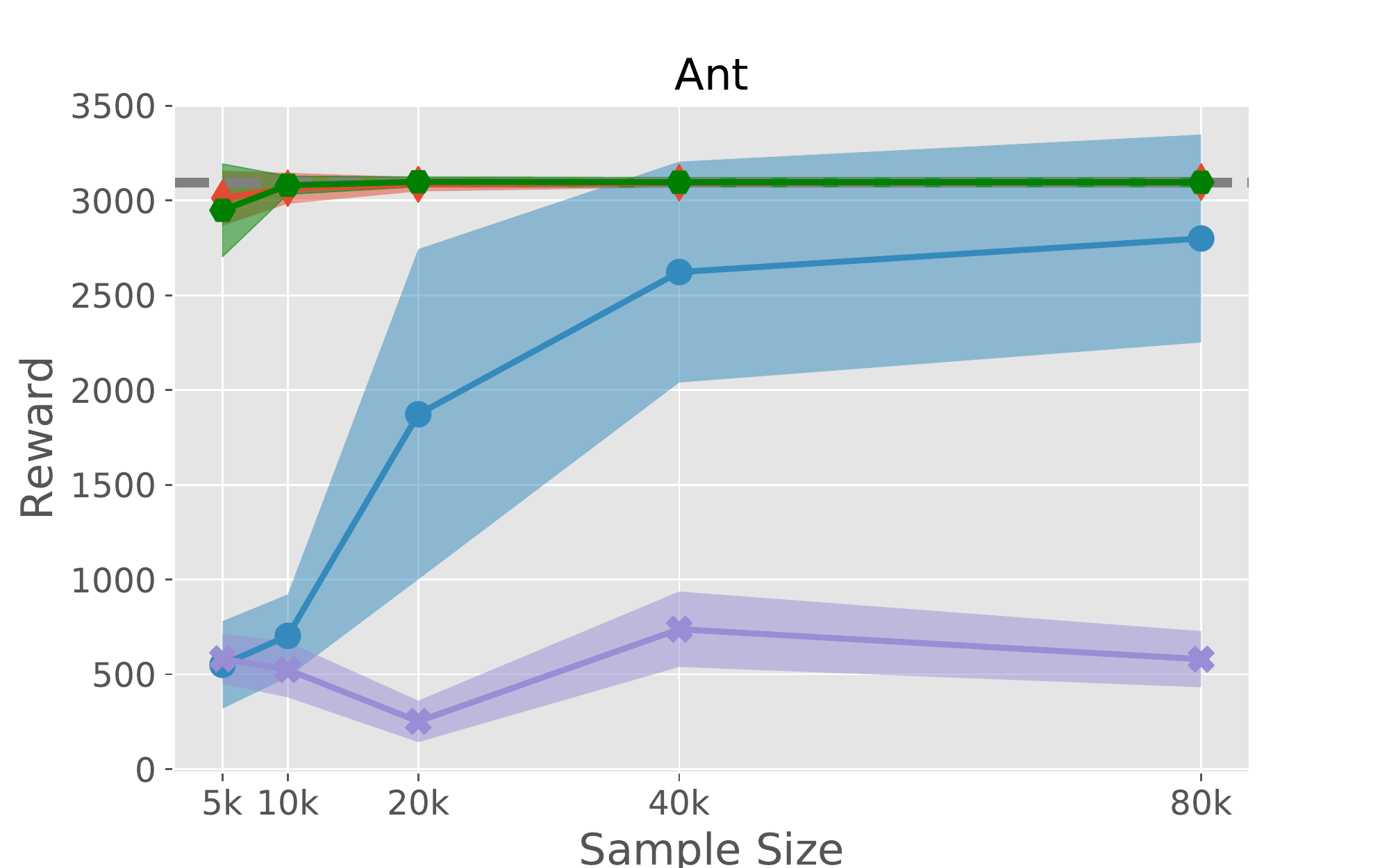}
\includegraphics[width=.45\linewidth]{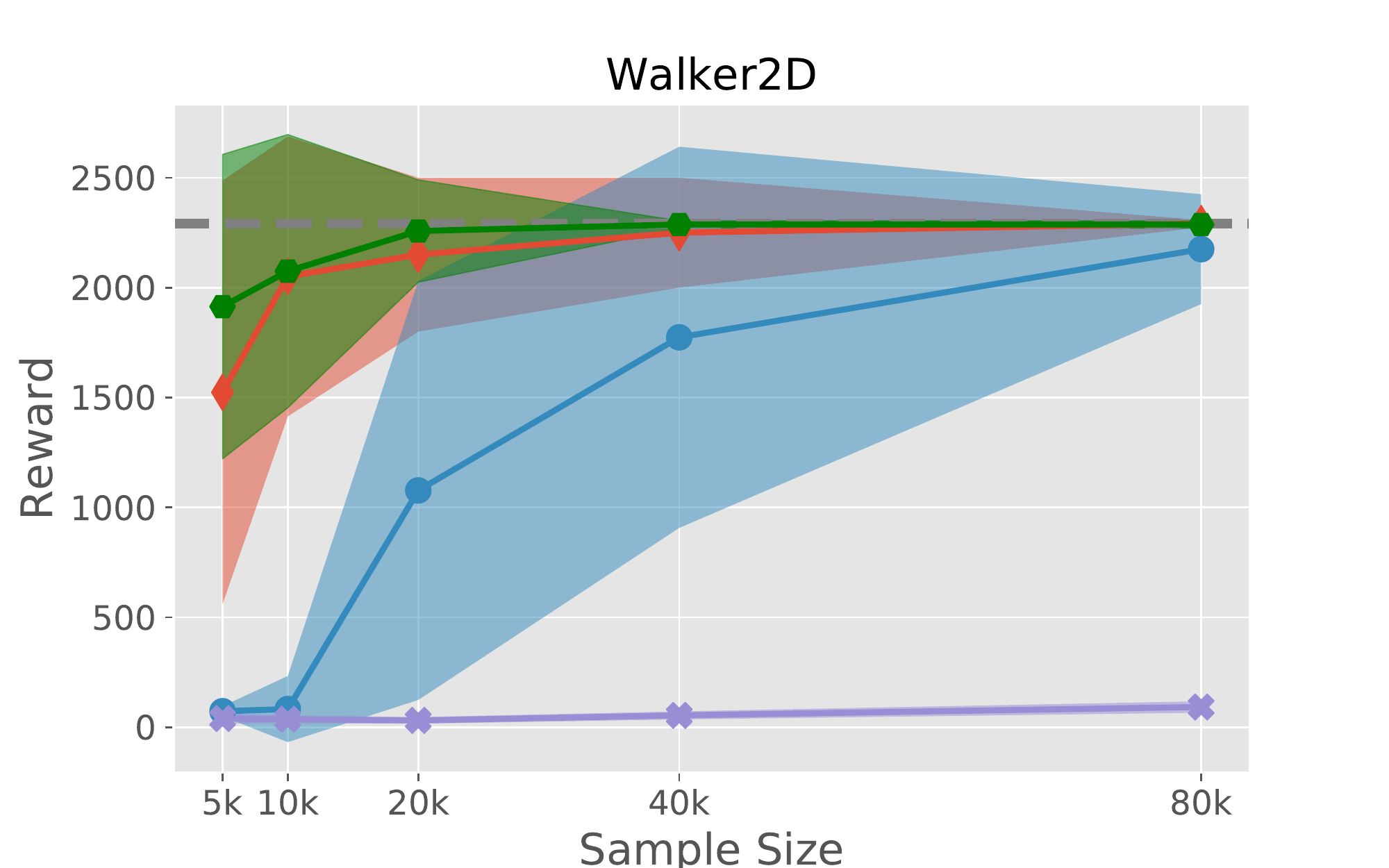}
\end{subfigure}%
\caption{Reward vs. Sample Size with fixed $\epsilon = 0.15$ in the demonstrations.
We  set the outliers randomly at the boundary -1 or 1, and vary the sample size from 5,000 to 80,000.
For different algorithms, we report  reward performance after training with 200 epochs.
The shaded region represents the standard deviation for 20 trials.
Vanilla BC (purple line) on corrupted demonstrations fails drastically, and cannot converge to expert level with large sample size.
Oracle BC (green line) denotes BC on expert demonstrations with the same sample size.
Our RBC (red line) achieves nearly the same reward performance  as Oracle BC even the sample size of the demonstration is small,
and achieves superior results compared to the state-of-the-art robust imitation learning method (blue line) \cite{NoisyBC2020}.
}
\label{fig:curve_size}
\end{figure*}

In \Cref{fig:curve} and \Cref{fig:curve_size},
our algorithm achieves superior results compared to the state-of-the-art robust imitation learning method \cite{NoisyBC2020} under different setups.
The key difference between  our method and the re-weighting idea \cite{NoisyBC2020} is that we can guarantee removing the outliers in the objective function \cref{equ:pi_MOMBC}, whereas the outliers may mislead the re-weighting process during the iterations. If the outliers have large weight in the previous iteration, the re-weighting  process will exacerbate the outliers' impacts.  
If an authentic and informative sample has large training loss, it will be down-weighted incorrectly, hence losing sample efficiency.
As in \Cref{fig:curve_size}, our RBC benefits from the \emph{tight theoretical bound} \Cref{thm:V_bound}, and achieves superior performance even the sample size (the number of trajectories) is small.



\section{Related Work}
\label{sec:related}

\textbf{Imitation Learning.} 
Behavior Cloning (BC) is the most widely-used imitation learning algorithm \citep{Pomerleau1988NIPS_BC, Pieter2018algorithmic} due to its simplicity, effectiveness and scalability, and has been widely used in practice. From a theoretical viewpoint,  it has been showed that BC achieves informational optimality in the offline setting \citep{Jiao2020FundamentalLimits} where we do not have  \emph{further online interactions} or the knowledge of the transition dynamic $\Tran$.
With online interaction, there's a line of research focusing on improving BC in different scenarios -- 
for example, 
\cite{Ross2011AIStats} proposed DAgger (Data Aggregation) by querying the expert policy in the online setting. 
\cite{brantley_SunWen_2019disagreement} proposed using an ensemble of BC as uncertainty measure and interacts with the environment to improve BC by taking the uncertainty into account, without the need to query the expert. 
Very recently, \citep{YuYang2021error, JiaoHan2021breaking, xu2021TAIL} leveraged the knowledge of the transition dynamic $\Tran$ to eliminate compounding error/distribution shift issue in BC.

Besides BC, there are other imitation learning algorithms:
\cite{ho2016GAIL} used generative adversarial networks for distribution matching to learn a reward function; 
\cite{reddy2019sqil} provided a RL framework to deal with IL by artificially setting the reward;
\cite{GuShixiang2020divergence} unified several existing imitation learning algorithm as minimizing distribution divergence between learned policy and expert demonstration, just to name a few.







\textbf{Offline RL.}
RL leverages the signal from reward function to train the policy.
Different from IL, offline RL often does not require the demonstration to be expert demonstration \citep[e.g.][]{fujimoto2019off, fujimoto2021minimalist, kumar2020conservative} (interested readers are referred to \citep{levine2020offline}), and 
even expects the offline data  with higher coverage for different sub-optimal policies \citep{buckman2020importance, jin2021pessimism, rashidinejad2021bridging}.
Behavior-agnostic setting~\citep{nachum2019dualdice, mousavi2020blackbox} even does not require the collected data from a single policy.

The closest relation between offline RL and IL is the learning of stationary visitation distribution, where learning such visitation distribution does not involve with reward signal, similar to IL.
A line of recent research especially for off-policy evaluation tries to learn the stationary visitation distribution of a given target policy~\citep[e.g.][]{liu2018breaking, nachum2019dualdice, tang2019doubly, mousavi2020blackbox, dai2020coindice}.
Especially \cite{Kostrikov2020Imitation} leverages the off-policy evaluation idea to IL area.



\textbf{Robustness in  IL and RL.}
There are several recent papers consider corruption-robust in either RL or IL.
In RL, \cite{Zhu2102} considers that the adversarial corruption may corrupt the whole episode in the online RL setting 
while a more recent one \citep{Zhu2106} considers \emph{offline RL} where $\epsilon$-fraction of the whole data set can be replaced by the outliers. However, the $\epsilon$ dependency scales with the dimension in \cite{Zhu2106}, yet $\epsilon$ can be a constant in this paper for robust offline IL. 
Many other papers consider perturbations, heavy tails, or corruptions in either reward function \citep{bubeck2013MOM} or in transition dynamic
\citep{xu2012DRO_MDP, Mannor2014scaling, HuanXu2017RLmismatch}.

The most related papers follow a similar setting of robust IL are \citep{wu2019imperfect, tangkaratt2020VILD, tangkaratt2021RobustImitation,
brown2019TREX, NoisyBC2020}, where they consider imperfect or noisy observations in imitation learning. 
However, they do not provide theoretical guarantees  
to  handle
arbitrary
outliers in the demonstrations.
And to the best of our knowledge, we provide the \emph{first
 theoretical guarantee} robust to constant fraction of arbitrary outliers in offline imitation learning.
Furthermore, 
\citep{wu2019imperfect, tangkaratt2020VILD, tangkaratt2021RobustImitation}
require additional  \emph{online interactions} 
with the environment,
and \cite{brown2019TREX, wu2019imperfect} require annotations for each demonstration, which costs a significant
human effort. 
Our algorithm achieves robustness guarantee from purely \emph{offline} demonstration, without the potentially costly or risky interaction with the real world environment or human annotations.






\subsection{Summary and Future Works} 
In this paper, we considered the corrupted demonstrations issues in imitation learning, and  proposed a novel robust algorithm, Robust Behavior Cloning, to deal with the corruptions in offline demonstration data set. 
The core technique is replacing the vanilla Maximum Likelihood Estimation with a Median-of-Means (MOM) objective which guarantees the  policy estimation and reward performance in the presence of constant fraction of outliers. 
Our algorithm has strong robustness guarantees and has competitive practical performance compared to existing methods.

There are several avenues for future work: 
since our work focuses on the corruption in offline data, 
 any improvement in \emph{online IL} which utilizes BC would benefit from the corruption-robustness guarantees and practical effectiveness
by our \emph{offline} RBC. 
Also,
it would also be of interest to apply our algorithm for real-world environment, such as automated medical diagnosis and autonomous driving.

\clearpage


\begin{thebibliography}{10}

\bibitem{Agarwal2019AJKS}
Alekh Agarwal, Nan Jiang, S.~Kakade, and Wen Sun.
\newblock Reinforcement learning: Theory and algorithms.
\newblock 2019.

\bibitem{Agarwal2020MLE}
Alekh Agarwal, Sham Kakade, Akshay Krishnamurthy, and Wen Sun.
\newblock Flambe: Structural complexity and representation learning of low rank
  mdps.
\newblock In {\em Advances in Neural Information Processing Systems},
  volume~33, pages 20095--20107, 2020.

\bibitem{MOM_alon1999space}
Noga Alon, Yossi Matias, and Mario Szegedy.
\newblock The space complexity of approximating the frequency moments.
\newblock {\em Journal of Computer and system sciences}, 58(1):137--147, 1999.

\bibitem{bellemare2020autonomous}
Marc~G Bellemare, Salvatore Candido, Pablo~Samuel Castro, Jun Gong, Marlos~C
  Machado, Subhodeep Moitra, Sameera~S Ponda, and Ziyu Wang.
\newblock Autonomous navigation of stratospheric balloons using reinforcement
  learning.
\newblock {\em Nature}, 588(7836):77--82, 2020.

\bibitem{brantley_SunWen_2019disagreement}
Kiant{\'e} Brantley, Wen Sun, and Mikael Henaff.
\newblock Disagreement-regularized imitation learning.
\newblock In {\em International Conference on Learning Representations}, 2019.

\bibitem{brown2019TREX}
Daniel Brown, Wonjoon Goo, Prabhat Nagarajan, and Scott Niekum.
\newblock Extrapolating beyond suboptimal demonstrations via inverse
  reinforcement learning from observations.
\newblock In {\em International conference on machine learning}, pages
  783--792. PMLR, 2019.

\bibitem{brownlees2015empirical}
Christian Brownlees, Emilien Joly, and G{\'a}bor Lugosi.
\newblock Empirical risk minimization for heavy-tailed losses.
\newblock {\em The Annals of Statistics}, 43(6):2507--2536, 2015.

\bibitem{bubeck2013MOM}
S{\'e}bastien Bubeck, Nicolo Cesa-Bianchi, and G{\'a}bor Lugosi.
\newblock Bandits with heavy tail.
\newblock {\em IEEE Transactions on Information Theory}, 59(11):7711--7717,
  2013.

\bibitem{buckman2020importance}
Jacob Buckman, Carles Gelada, and Marc~G Bellemare.
\newblock The importance of pessimism in fixed-dataset policy optimization.
\newblock In {\em International Conference on Learning Representations}, 2020.

\bibitem{chen2013robust}
Yudong Chen, Constantine Caramanis, and Shie Mannor.
\newblock Robust sparse regression under adversarial corruption.
\newblock In {\em International Conference on Machine Learning}, pages
  774--782, 2013.

\bibitem{chen2017distributed}
Yudong Chen, Lili Su, and Jiaming Xu.
\newblock Distributed statistical machine learning in adversarial settings:
  Byzantine gradient descent.
\newblock {\em Proceedings of the ACM on Measurement and Analysis of Computing
  Systems}, 1(2):44, 2017.

\bibitem{coumans2016pybullet}
Erwin Coumans and Yunfei Bai.
\newblock Pybullet, a python module for physics simulation for games, robotics
  and machine learning.
\newblock 2016.

\bibitem{dai2020coindice}
Bo~Dai, Ofir Nachum, Yinlam Chow, Lihong Li, Csaba Szepesv{\'a}ri, and Dale
  Schuurmans.
\newblock Coindice: Off-policy confidence interval estimation.
\newblock In {\em Advances in Neural Information Processing Systems}, 2020.

\bibitem{sever2018}
Ilias Diakonikolas, Gautam Kamath, Daniel Kane, Jerry Li, Jacob Steinhardt, and
  Alistair Stewart.
\newblock Sever: A robust meta-algorithm for stochastic optimization.
\newblock In {\em International Conference on Machine Learning}, pages
  1596--1606, 2019.

\bibitem{eykholt2018robust}
Kevin Eykholt, Ivan Evtimov, Earlence Fernandes, Bo~Li, Amir Rahmati, Chaowei
  Xiao, Atul Prakash, Tadayoshi Kohno, and Dawn Song.
\newblock Robust physical-world attacks on deep learning visual classification.
\newblock In {\em Proceedings of the IEEE conference on computer vision and
  pattern recognition}, pages 1625--1634, 2018.

\bibitem{fujimoto2021minimalist}
Scott Fujimoto and Shixiang~Shane Gu.
\newblock A minimalist approach to offline reinforcement learning.
\newblock {\em arXiv preprint arXiv:2106.06860}, 2021.

\bibitem{fujimoto2019off}
Scott Fujimoto, David Meger, and Doina Precup.
\newblock Off-policy deep reinforcement learning without exploration.
\newblock In {\em International Conference on Machine Learning}, pages
  2052--2062. PMLR, 2019.

\bibitem{GuShixiang2020divergence}
Seyed Kamyar~Seyed Ghasemipour, Richard Zemel, and Shixiang Gu.
\newblock A divergence minimization perspective on imitation learning methods.
\newblock In {\em Conference on Robot Learning}, pages 1259--1277. PMLR, 2020.

\bibitem{haarnoja2017SoftQ}
Tuomas Haarnoja, Haoran Tang, Pieter Abbeel, and Sergey Levine.
\newblock Reinforcement learning with deep energy-based policies.
\newblock In {\em International Conference on Machine Learning}, pages
  1352--1361. PMLR, 2017.

\bibitem{haarnoja2018SAC}
Tuomas Haarnoja, Aurick Zhou, Pieter Abbeel, and Sergey Levine.
\newblock Soft actor-critic: Off-policy maximum entropy deep reinforcement
  learning with a stochastic actor.
\newblock In {\em International conference on machine learning}, pages
  1861--1870. PMLR, 2018.

\bibitem{ho2016GAIL}
Jonathan Ho and Stefano Ermon.
\newblock Generative adversarial imitation learning.
\newblock {\em Advances in neural information processing systems},
  29:4565--4573, 2016.

\bibitem{hsu2016loss}
Daniel Hsu and Sivan Sabato.
\newblock Loss minimization and parameter estimation with heavy tails.
\newblock {\em The Journal of Machine Learning Research}, 17(1):543--582, 2016.

\bibitem{Huber1964RobustEO}
P.~J. Huber.
\newblock Robust estimation of a location parameter.
\newblock {\em Annals of Mathematical Statistics}, 35:492--518, 1964.

\bibitem{huber2011textbook}
Peter~J Huber.
\newblock Robust statistics.
\newblock In {\em International Encyclopedia of Statistical Science}, pages
  1248--1251. Springer, 2011.

\bibitem{jalal2020robust}
Ajil Jalal, Liu Liu, Alexandros~G Dimakis, and Constantine Caramanis.
\newblock Robust compressed sensing using generative models.
\newblock {\em Advances in Neural Information Processing Systems}, 2020.

\bibitem{MOM_jerrum1986random}
Mark~R Jerrum, Leslie~G Valiant, and Vijay~V Vazirani.
\newblock Random generation of combinatorial structures from a uniform
  distribution.
\newblock {\em Theoretical Computer Science}, 43:169--188, 1986.

\bibitem{jin2021pessimism}
Ying Jin, Zhuoran Yang, and Zhaoran Wang.
\newblock Is pessimism provably efficient for offline rl?
\newblock In {\em International Conference on Machine Learning}, pages
  5084--5096. PMLR, 2021.

\bibitem{kendall2019learning}
Alex Kendall, Jeffrey Hawke, David Janz, Przemyslaw Mazur, Daniele Reda,
  John-Mark Allen, Vinh-Dieu Lam, Alex Bewley, and Amar Shah.
\newblock Learning to drive in a day.
\newblock In {\em 2019 International Conference on Robotics and Automation
  (ICRA)}, pages 8248--8254. IEEE, 2019.

\bibitem{kingma2014adam}
Diederik~P Kingma and Jimmy Ba.
\newblock Adam: A method for stochastic optimization.
\newblock {\em arXiv preprint arXiv:1412.6980}, 2014.

\bibitem{survey_DRL_autonomousdriving}
B~Ravi Kiran, Ibrahim Sobh, Victor Talpaert, Patrick Mannion, Ahmad~A
  Al~Sallab, Senthil Yogamani, and Patrick P{\'e}rez.
\newblock Deep reinforcement learning for autonomous driving: A survey.
\newblock {\em IEEE Transactions on Intelligent Transportation Systems}, 2021.

\bibitem{klivans2018efficient}
Adam Klivans, Pravesh~K Kothari, and Raghu Meka.
\newblock Efficient algorithms for outlier-robust regression.
\newblock In {\em Conference On Learning Theory}, pages 1420--1430. PMLR, 2018.

\bibitem{Kostrikov2020Imitation}
Ilya Kostrikov, Ofir Nachum, and Jonathan Tompson.
\newblock Imitation learning via off-policy distribution matching.
\newblock In {\em International Conference on Learning Representations}, 2020.

\bibitem{kumar2020conservative}
Aviral Kumar, Aurick Zhou, George Tucker, and Sergey Levine.
\newblock Conservative q-learning for offline reinforcement learning.
\newblock {\em arXiv preprint arXiv:2006.04779}, 2020.

\bibitem{LeCam2012asymptotic}
Lucien Le~Cam.
\newblock {\em Asymptotic methods in statistical decision theory}.
\newblock Springer Science \& Business Media, 2012.

\bibitem{lecue2020robust}
Guillaume Lecu{\'e} and Matthieu Lerasle.
\newblock Robust machine learning by median-of-means: theory and practice.
\newblock {\em The Annals of Statistics}, 48(2):906--931, 2020.

\bibitem{levine2020offline}
Sergey Levine, Aviral Kumar, George Tucker, and Justin Fu.
\newblock Offline reinforcement learning: Tutorial, review, and perspectives on
  open problems.
\newblock {\em arXiv preprint arXiv:2005.01643}, 2020.

\bibitem{li11unbiased}
Lihong Li, Wei Chu, John Langford, and Xuanhui Wang.
\newblock Unbiased offline evaluation of contextual-bandit-based news article
  recommendation algorithms.
\newblock In {\em Proceedings of the 4th International Conference on Web Search
  and Data Mining (WSDM)}, pages 297--306, 2011.

\bibitem{liu2019}
Liu Liu, Tianyang Li, and Constantine Caramanis.
\newblock High dimensional robust $m$-estimation: Arbitrary corruption and
  heavy tails.
\newblock {\em arXiv preprint arXiv:1901.08237}, 2019.

\bibitem{liuAISTATS}
Liu Liu, Yanyao Shen, Tianyang Li, and Constantine Caramanis.
\newblock High dimensional robust sparse regression.
\newblock In {\em International Conference on Artificial Intelligence and
  Statistics}, pages 411--421. PMLR, 2020.

\bibitem{liu2018breaking}
Qiang Liu, Lihong Li, Ziyang Tang, and Dengyong Zhou.
\newblock Breaking the curse of horizon: Infinite-horizon off-policy
  estimation.
\newblock In {\em Advances in Neural Information Processing Systems}, pages
  5356--5366, 2018.

\bibitem{lugosi2019risk}
Gabor Lugosi and Shahar Mendelson.
\newblock Risk minimization by median-of-means tournaments.
\newblock {\em Journal of the European Mathematical Society}, 22(3):925--965,
  2019.

\bibitem{ma2018improved}
Xiaobai Ma, Katherine Driggs-Campbell, and Mykel~J Kochenderfer.
\newblock Improved robustness and safety for autonomous vehicle control with
  adversarial reinforcement learning.
\newblock In {\em 2018 IEEE Intelligent Vehicles Symposium (IV)}, pages
  1665--1671. IEEE, 2018.

\bibitem{mnih2015human}
Volodymyr Mnih, Koray Kavukcuoglu, David Silver, Andrei~A Rusu, Joel Veness,
  Marc~G Bellemare, Alex Graves, Martin Riedmiller, Andreas~K Fidjeland, Georg
  Ostrovski, et~al.
\newblock Human-level control through deep reinforcement learning.
\newblock {\em nature}, 518(7540):529--533, 2015.

\bibitem{mousavi2020blackbox}
Ali Mousavi, Lihong Li, Qiang Liu, and Denny Zhou.
\newblock Black-box off-policy estimation for infinite-horizon reinforcement
  learning.
\newblock In {\em International Conference on Learning Representations}, 2020.

\bibitem{nachum2019dualdice}
Ofir Nachum, Yinlam Chow, Bo~Dai, and Lihong Li.
\newblock Dualdice: Behavior-agnostic estimation of discounted stationary
  distribution corrections.
\newblock In {\em Advances in Neural Information Processing Systems}, pages
  2318--2328, 2019.

\bibitem{neff2016automation}
Gina Neff and Peter Nagy.
\newblock Automation, algorithms, and politics| talking to bots: Symbiotic
  agency and the case of tay.
\newblock {\em International Journal of Communication}, 10:17, 2016.

\bibitem{MOM_nemirovsky1983problem}
Arkadii~Semenovich Nemirovsky and David~Borisovich Yudin.
\newblock Problem complexity and method efficiency in optimization.
\newblock 1983.

\bibitem{Pieter2018algorithmic}
Takayuki Osa, Joni Pajarinen, Gerhard Neumann, J~Andrew Bagnell, Pieter Abbeel,
  Jan Peters, et~al.
\newblock An algorithmic perspective on imitation learning.
\newblock {\em Foundations and Trends{\textregistered} in Robotics},
  7(1-2):1--179, 2018.

\bibitem{Pomerleau1988NIPS_BC}
D.~Pomerleau.
\newblock Alvinn: An autonomous land vehicle in a neural network.
\newblock In {\em Advances in Neural Information Processing Systems}, 1988.

\bibitem{ravikumar2020RGD}
Adarsh Prasad, Arun~Sai Suggala, Sivaraman Balakrishnan, and Pradeep Ravikumar.
\newblock Robust estimation via robust gradient estimation.
\newblock {\em Journal of the Royal Statistical Society: Series B (Statistical
  Methodology)}, 82(3):601--627, 2020.

\bibitem{JiaoHan2021breaking}
Nived Rajaraman, Yanjun Han, Lin~F Yang, Kannan Ramchandran, and Jiantao Jiao.
\newblock Provably breaking the quadratic error compounding barrier in
  imitation learning, optimally.
\newblock {\em arXiv preprint arXiv:2102.12948}, 2021.

\bibitem{Jiao2020FundamentalLimits}
Nived Rajaraman, Lin Yang, Jiantao Jiao, and Kannan Ramchandran.
\newblock Toward the fundamental limits of imitation learning.
\newblock {\em Advances in Neural Information Processing Systems}, 33, 2020.

\bibitem{rashidinejad2021bridging}
Paria Rashidinejad, Banghua Zhu, Cong Ma, Jiantao Jiao, and Stuart Russell.
\newblock Bridging offline reinforcement learning and imitation learning: A
  tale of pessimism.
\newblock {\em arXiv preprint arXiv:2103.12021}, 2021.

\bibitem{reddy2019sqil}
Siddharth Reddy, Anca~D Dragan, and Sergey Levine.
\newblock Sqil: Imitation learning via reinforcement learning with sparse
  rewards.
\newblock In {\em International Conference on Learning Representations}, 2019.

\bibitem{ross2014reinforcement}
Stephane Ross and J~Andrew Bagnell.
\newblock Reinforcement and imitation learning via interactive no-regret
  learning.
\newblock {\em arXiv preprint arXiv:1406.5979}, 2014.

\bibitem{Ross2011AIStats}
St{\'e}phane Ross, Geoffrey~J. Gordon, and J.~Bagnell.
\newblock A reduction of imitation learning and structured prediction to
  no-regret online learning.
\newblock In {\em AISTATS}, 2011.

\bibitem{rousseeuw1984least}
Peter~J Rousseeuw.
\newblock Least median of squares regression.
\newblock {\em Journal of the American statistical association},
  79(388):871--880, 1984.

\bibitem{HuanXu2017RLmismatch}
Aurko Roy, Huan Xu, and Sebastian Pokutta.
\newblock Reinforcement learning under model mismatch.
\newblock In {\em Advances in Neural Information Processing Systems},
  volume~30, 2017.

\bibitem{NoisyBC2020}
Fumihiro Sasaki and Ryota Yamashina.
\newblock Behavioral cloning from noisy demonstrations.
\newblock In {\em International Conference on Learning Representations}, 2021.

\bibitem{shen2019ITLM}
Yanyao Shen and Sujay Sanghavi.
\newblock Learning with bad training data via iterative trimmed loss
  minimization.
\newblock In {\em International Conference on Machine Learning}, pages
  5739--5748. PMLR, 2019.

\bibitem{silver2018general}
David Silver, Thomas Hubert, Julian Schrittwieser, Ioannis Antonoglou, Matthew
  Lai, Arthur Guez, Marc Lanctot, Laurent Sifre, Dharshan Kumaran, Thore
  Graepel, et~al.
\newblock A general reinforcement learning algorithm that masters chess, shogi,
  and go through self-play.
\newblock {\em Science}, 362(6419):1140--1144, 2018.

\bibitem{csaba2010algorithms}
Csaba Szepesv{\'a}ri.
\newblock Algorithms for reinforcement learning.
\newblock {\em Synthesis lectures on artificial intelligence and machine
  learning}, 4(1):1--103, 2010.

\bibitem{Mannor2014scaling}
Aviv Tamar, Shie Mannor, and Huan Xu.
\newblock Scaling up robust mdps using function approximation.
\newblock In {\em International conference on machine learning}, pages
  181--189. PMLR, 2014.

\bibitem{tang2019doubly}
Ziyang Tang, Yihao Feng, Lihong Li, Dengyong Zhou, and Qiang Liu.
\newblock Doubly robust bias reduction in infinite horizon off-policy
  estimation.
\newblock In {\em International Conference on Learning Representations (ICLR)},
  2020.

\bibitem{tangkaratt2021RobustImitation}
Voot Tangkaratt, Nontawat Charoenphakdee, and Masashi Sugiyama.
\newblock Robust imitation learning from noisy demonstrations.
\newblock In {\em International Conference on Artificial Intelligence and
  Statistics}, pages 298--306. PMLR, 2021.

\bibitem{tangkaratt2020VILD}
Voot Tangkaratt, Bo~Han, Mohammad~Emtiyaz Khan, and Masashi Sugiyama.
\newblock Variational imitation learning with diverse-quality demonstrations.
\newblock In {\em International Conference on Machine Learning}, pages
  9407--9417. PMLR, 2020.

\bibitem{thomas17predictive}
Philip~S. Thomas, Georgios Theocharous, Mohammad Ghavamzadeh, Ishan Durugkar,
  and Emma Brunskill.
\newblock Predictive off-policy policy evaluation for nonstationary decision
  problems, with applications to digital marketing.
\newblock In {\em Proceedings of the 31st AAAI Conference on Artificial
  Intelligence}, pages 4740--4745, 2017.

\bibitem{Tsybakov2009IntroductionTN}
A.~Tsybakov.
\newblock Introduction to nonparametric estimation.
\newblock In {\em Springer series in statistics}, 2009.

\bibitem{geer2000empirical}
Sara van~de Geer.
\newblock {\em Empirical Processes in M-estimation}, volume~6.
\newblock Cambridge university press, 2000.

\bibitem{vinyals2019grandmaster}
Oriol Vinyals, Igor Babuschkin, Wojciech~M Czarnecki, Micha{\"e}l Mathieu,
  Andrew Dudzik, Junyoung Chung, David~H Choi, Richard Powell, Timo Ewalds,
  Petko Georgiev, et~al.
\newblock Grandmaster level in starcraft ii using multi-agent reinforcement
  learning.
\newblock {\em Nature}, 575(7782):350--354, 2019.

\bibitem{wu2019imperfect}
Yueh-Hua Wu, Nontawat Charoenphakdee, Han Bao, Voot Tangkaratt, and Masashi
  Sugiyama.
\newblock Imitation learning from imperfect demonstration.
\newblock In {\em International Conference on Machine Learning}, pages
  6818--6827. PMLR, 2019.

\bibitem{xu2012DRO_MDP}
Huan Xu and Shie Mannor.
\newblock Distributionally robust markov decision processes.
\newblock {\em Mathematics of Operations Research}, 37(2):288--300, 2012.

\bibitem{YuYang2021error}
Tian Xu, Ziniu Li, and Yang Yu.
\newblock Error bounds of imitating policies and environments for reinforcement
  learning.
\newblock {\em IEEE Transactions on Pattern Analysis and Machine Intelligence},
  2021.

\bibitem{xu2021TAIL}
Tian Xu, Ziniu Li, and Yang Yu.
\newblock Nearly minimax optimal adversarial imitation learning with known and
  unknown transitions.
\newblock {\em arXiv preprint arXiv:2106.10424}, 2021.

\bibitem{Yin_median}
Dong Yin, Yudong Chen, Kannan Ramchandran, and Peter Bartlett.
\newblock {B}yzantine-robust distributed learning: Towards optimal statistical
  rates.
\newblock In {\em Proceedings of the 35th International Conference on Machine
  Learning}, volume~80 of {\em Proceedings of Machine Learning Research}, pages
  5650--5659. PMLR, 10--15 Jul 2018.

\bibitem{RLHealthCare}
Chao Yu, Jiming Liu, and Shamim Nemati.
\newblock Reinforcement learning in healthcare: A survey.
\newblock {\em arXiv preprint arXiv:1908.08796}, 2019.

\bibitem{zhang2006AOS}
Tong Zhang.
\newblock From $\epsilon$-entropy to kl-entropy: Analysis of minimum
  information complexity density estimation.
\newblock {\em The Annals of Statistics}, 34(5):2180--2210, 2006.

\bibitem{Zhu2106}
Xuezhou Zhang, Yiding Chen, Jerry Zhu, and Wen Sun.
\newblock Corruption-robust offline reinforcement learning.
\newblock {\em arXiv preprint arXiv:2106.06630}, 2021.

\bibitem{Zhu2102}
Xuezhou Zhang, Yiding Chen, Xiaojin Zhu, and Wen Sun.
\newblock Robust policy gradient against strong data corruption.
\newblock {\em arXiv preprint arXiv:2102.05800}, 2021.

\end{thebibliography}


\clearpage


\newpage

\appendix

\setstretch{1.3}

\section{Proofs}
\label{sec:proof}
The analysis of  maximum likelihood estimation is standard in i.i.d. setting for the supervised learning setting \citep{geer2000empirical}. 
In our proofs of the robust offline imitation learning algorithm, the analysis for the sequential decision making leverages the martingale analysis technique from   \citep{zhang2006AOS, Agarwal2020MLE}. 

Our Robust Behavior Cloning (\Cref{def:RBC}) solves the following optimization
\begin{align}
    \pir =  \arg \min_{\pi \in \Pi} \max_{\pi' \in \Pi}   \underset{1\leq j \leq M}{\mathrm{median}} 
    \left( \ell_j(\pi) - \ell_j(\pi') \right),
\end{align} 
where the loss function $\ell_j(\pi)$ is the average Negative Log-Likelihood in the batch $B_j$:
\begin{align} 
    \ell_j(\pi) = \frac{1}{b}\sum_{(\obs, \acts) \in B_j}-\log(\pi(\acts | \obs)).
\end{align}

This can be understood as a robust counterpart for the maximum likelihood estimation in sequential decision process.

With a slight abuse of notation, we use  $x_i$ and $y_i$ to denote the observation and action, and the underlying unknown expert distribution is 
$y_i \sim p(\cdot|x_i)$ and $p(y|x) = f^*(x, y)$.
Following \Cref{ass:discrete}, we have the realizable $f^* \in \Fcal$, and the discrete function class satisfies  $|\Fcal| < \infty$.

Let $\Dcal$ denote the data set and let $\Dcal'$ denote a tangent sequence
$\{x_i', y_i'\}_{i=1}^{|\Dcal|}$.
The tangent sequence is defined as $x_i' \sim \Dist_i(x_{1:i-1}, y_{1:i-1})$ and $y_i' \sim p(\cdot|x_i')$. Note here that $x_i'$ follows from the distribution $\Dist_i$, and depends on the original sequence, hence the tangent sequence is independent conditional on $\Dcal$. 

For this martingale process, we first introduce a decoupling Lemma from \cite{Agarwal2020MLE}. 

\begin{lemm}
\label{lem:decouple}[Lemma 24 in \cite{Agarwal2020MLE}]
Let $\Dcal$ be a dataset, and let $\Dcal'$ be a tangent sequence. 
Let $\Gamma(f,\Dcal) = \sum_{(x,y)\in \Dcal}\phi(f,(x,y))$ be any function
which can be decomposed additively across samples in $\Dcal$.
Here, $\phi$ is any
function of $f$ and sample $(x, y)$.
Let $\fhat =\fhat(\Dcal)$ be any estimator taking  the dataset $\Dcal$ as input and with range $\Fcal$. Then we have
\begin{align*}
\Expe_{\Dcal}\left[   \exp \left(
\Gamma(\fhat,\Dcal) - \log \Expe_{\Dcal'}\exp(\Gamma(\fhat,\Dcal')) - \log |\Fcal|   \right)   \right] \leq 1.
\end{align*}
\end{lemm}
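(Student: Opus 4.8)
The plan is to run the classical decoupling argument for maximum likelihood estimation (cf.\ \citep{zhang2006AOS, Agarwal2020MLE}) in three stages: a union bound over the finite class $\Fcal$ to strip the data dependence out of $\fhat$; a factorization of the tangent-sequence normalizer using the conditional independence of $\Dcal'$ given $\Dcal$; and a telescoping (tower-property) argument against the martingale filtration generated by $\Dcal$.

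First I would eliminate the estimator by brute force. Exponentiating the bracketed quantity and using $\fhat(\Dcal)\in\Fcal$ together with nonnegativity of every term, we get the pointwise bound
\begin{align*}
\exp\!\left( \Gamma(\fhat,\Dcal) - \log \Expe_{\Dcal'}\exp(\Gamma(\fhat,\Dcal')) - \log|\Fcal| \right)
\;\le\; \frac{1}{|\Fcal|}\sum_{f \in \Fcal} \frac{\exp\!\left( \Gamma(f,\Dcal) \right)}{\Expe_{\Dcal'}\exp\!\left( \Gamma(f,\Dcal') \right)} .
\end{align*}
Taking $\Expe_{\Dcal}$ of both sides and exchanging it with the finite sum, it suffices to prove that for every \emph{fixed} $f\in\Fcal$,
\begin{align*}
\Expe_{\Dcal}\!\left[ \frac{\exp(\Gamma(f,\Dcal))}{\Expe_{\Dcal'}\exp(\Gamma(f,\Dcal'))} \right] \;\le\; 1 .
\end{align*}

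Next I would open up the normalizer. Write $n=|\Dcal|$ and $\Gamma(f,\Dcal')=\sum_{i=1}^{n}\phi\bigl(f,(x_i',y_i')\bigr)$. Because conditionally on $\Dcal$ the tangent variables $(x_i',y_i')$ are independent, with $x_i'\sim\Dist_i(x_{1:i-1},y_{1:i-1})$ and $y_i'\sim p(\cdot\mid x_i')$, the normalizer factorizes as
\begin{align*}
\Expe_{\Dcal'}\exp(\Gamma(f,\Dcal')) \;=\; \prod_{i=1}^{n} \Expe\bigl[\, \exp\bigl(\phi(f,(x_i',y_i'))\bigr) \;\big|\; x_{1:i-1},y_{1:i-1} \,\bigr] .
\end{align*}
Since $(x_i,y_i)$ obeys exactly the same one-step conditional law as $(x_i',y_i')$, the $i$-th factor coincides with $\Expe\bigl[\exp(\phi(f,(x_i,y_i)))\mid x_{1:i-1},y_{1:i-1}\bigr]$. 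Setting $Z_i := \exp(\phi(f,(x_i,y_i))) \big/ \Expe\bigl[\exp(\phi(f,(x_i,y_i)))\mid x_{1:i-1},y_{1:i-1}\bigr]$, the ratio in the previous display is exactly $\prod_{i=1}^{n}Z_i$, and $\Expe[Z_i\mid x_{1:i-1},y_{1:i-1}]=1$ by construction. Hence the partial products $\prod_{i\le k}Z_i$ form a nonnegative martingale, and peeling off the conditional expectations from $i=n$ down to $i=1$ via the tower property gives $\Expe_{\Dcal}\bigl[\prod_{i=1}^{n}Z_i\bigr]=1$, which is the claim (in fact with equality).

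I expect the main obstacle to be the middle stage: one must verify that the tangent construction really does make $\Dcal'$ conditionally independent given the \emph{entire} original dataset $\Dcal$ while still reproducing the one-step conditional laws of $(x_i,y_i)$, since this is precisely what lets the normalizer split into the per-step conditional expectations that telescope against the numerator. Once that matching is pinned down, the union bound and the martingale telescoping are both routine.
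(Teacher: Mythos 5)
Your argument is correct. Note that the paper itself does not prove this lemma --- it imports it verbatim as Lemma~24 of \cite{Agarwal2020MLE} --- so the only meaningful comparison is with the standard proof in that reference (which in turn follows \cite{zhang2006AOS}), and your three stages (bounding the estimator term by the average of the nonnegative ratios over the finite class, factorizing the conditional normalizer over the tangent sequence, and telescoping the resulting unit-mean multiplicative martingale via the tower property) reproduce that argument faithfully; the only point worth flagging is the degenerate case where a conditional normalizer $\Expe[\exp(\phi(f,(x_i,y_i)))\mid x_{1:i-1},y_{1:i-1}]$ is infinite, in which case the corresponding ratio vanishes and the inequality (rather than your claimed equality) still holds.
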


Then we present a Lemma which upper bounds the TV distance via a loss function closely related to KL divergence. Such bounds for probabilistic distributions are discussed extensively in literature such as
\cite{Tsybakov2009IntroductionTN}.
\begin{lemm}
\label{lem:strongconvexity}
[Lemma 25 in \cite{Agarwal2020MLE}]
For any two conditional probability densities $f_1,f_2$ and any state
distribution $\Dist \in \Delta(\mathcal{X})$ we have
\begin{align*}
\Expe_{x \sim \Dist} \normtv{f_1(x,\cdot) - f_2(x,\cdot)}^2 \leq - 2\log \Expe_{x\sim\Dist,y \sim f_2(\cdot \mid x)} \exp \left(- \frac{1}{2}\log\frac{f_2(x,y)}{f_1(x,y)} \right).
\end{align*}
\end{lemm}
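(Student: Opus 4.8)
The plan is to unwind the right-hand side into a standard statistical affinity and then connect it to the total-variation distance through two elementary steps. First I would observe that the exponential inside the expectation collapses: since $\exp\!\big(-\tfrac12\log\tfrac{f_2(x,y)}{f_1(x,y)}\big)=\sqrt{f_1(x,y)/f_2(x,y)}$, integrating against $y\sim f_2(\cdot\mid x)$ produces the Bhattacharyya/Hellinger affinity
\[
A(x):=\Expe_{y\sim f_2(\cdot\mid x)}\sqrt{\tfrac{f_1(x,y)}{f_2(x,y)}}=\int\sqrt{f_1(x,y)\,f_2(x,y)}\,dy\in(0,1],
\]
where $A(x)\le 1$ follows from Cauchy--Schwarz. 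Consequently the right-hand side of the lemma is exactly $-2\log\Expe_{x\sim\Dist}A(x)$, and the whole claim reduces to bounding $\Expe_{x\sim\Dist}\normtv{f_1(x,\cdot)-f_2(x,\cdot)}^2$ by $-2\log\Expe_{x\sim\Dist}A(x)$.

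Next I would establish the pointwise inequality $\normtv{f_1(x,\cdot)-f_2(x,\cdot)}^2\le 2\,(1-A(x))$ for each fixed $x$. Writing $\normtv{f_1(x,\cdot)-f_2(x,\cdot)}=\tfrac12\int|f_1-f_2|$ and factoring $f_1-f_2=(\sqrt{f_1}-\sqrt{f_2})(\sqrt{f_1}+\sqrt{f_2})$, a single application of Cauchy--Schwarz gives $\normtv{f_1-f_2}^2\le\tfrac14\big(\int(\sqrt{f_1}-\sqrt{f_2})^2\big)\big(\int(\sqrt{f_1}+\sqrt{f_2})^2\big)=(1-A(x))(1+A(x))$, using $\int(\sqrt{f_1}\mp\sqrt{f_2})^2=2(1\mp A(x))$. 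Bounding $1+A(x)\le 2$ then yields the claimed pointwise estimate; this is the familiar inequality $\mathrm{TV}^2\le 2H^2$ relating total-variation to the squared Hellinger distance $H^2(x)=1-A(x)$.

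The final step stitches these together, and here the direction of the inequalities is the one delicate point. Taking $\Expe_{x\sim\Dist}$ of the pointwise bound gives $\Expe_{x\sim\Dist}\normtv{f_1-f_2}^2\le 2\big(1-\Expe_{x\sim\Dist}A(x)\big)$. It is tempting to bound each summand by $-2\log A(x)$ and then invoke Jensen, but Jensen moves the logarithm in the wrong direction; instead I would apply the elementary inequality $1-t\le-\log t$ (valid for all $t>0$) at the level of the outer average, with $t=\Expe_{x\sim\Dist}A(x)\in(0,1]$, to obtain $2\big(1-\Expe_{x\sim\Dist}A(x)\big)\le-2\log\Expe_{x\sim\Dist}A(x)$. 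Chaining the three displays closes the argument. The main obstacle is precisely this last linearization: one must keep the logarithm on the outer expectation of the affinity rather than distribute it over $x$, since the naive pointwise-plus-Jensen estimate runs in the wrong direction; everything else is a short computation once the affinity identity and the Cauchy--Schwarz bound are in place.
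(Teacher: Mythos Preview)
The paper does not supply its own proof of this lemma; it simply cites Lemma~25 of \cite{Agarwal2020MLE} and uses the result as a black box inside the proof of Theorem~4.1. So there is no ``paper's proof'' to compare against, and your proposal stands on its own.

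Your argument is correct. The three steps are exactly the standard ones: (i) recognise that the inner expectation is the Hellinger affinity $A(x)=\int\sqrt{f_1(x,y)f_2(x,y)}\,dy$, so the right-hand side is $-2\log\Expe_{x\sim\Dist}A(x)$; (ii) the Cauchy--Schwarz factorisation $|f_1-f_2|=|\sqrt{f_1}-\sqrt{f_2}|\cdot|\sqrt{f_1}+\sqrt{f_2}|$ gives the pointwise bound $\normtv{f_1(x,\cdot)-f_2(x,\cdot)}^2\le(1-A(x))(1+A(x))\le 2(1-A(x))$; (iii) average over $x$ and apply $1-t\le-\log t$ with $t=\Expe_{x\sim\Dist}A(x)\in(0,1]$. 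Your remark that one must apply the logarithmic inequality to the \emph{averaged} affinity rather than pointwise (since Jensen would go the wrong way) is the one place a careless reader might slip, and you handle it correctly. This is the standard route through Le Cam's inequality and the elementary $1-t\le-\log t$ bound; it is almost certainly what the cited reference does as well.
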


\subsection{Proof of \Cref{thm:pi_bound}}

With these Lemmas in hand, we are now equipped to prove our main theorem (\Cref{thm:pi_bound}), which guarantees the solution $\pir$ of \cref{equ:pi_MOMBC} is close to the optimal policy $\pis$ in TV distance.
\begin{theo}
[\Cref{thm:pi_bound}]
Suppose we have corrupted demonstration data set  $\Demo$ with sample size $N$ from 
\Cref{def:Huber}, and there exists a constant corruption ratio $\epsilon < 0.5$.
Under \Cref{ass:discrete}, 
let $\tau$ to be the output objective value with $\pir$ in the optimization \cref{equ:pi_MOMBC} with the batch size $b \leq  \frac{1}{3 \epsilon}$, then 
with probability at least $1-c_1\delta$,
we have
\begin{align*}
    \Expe_{s \sim \dist} \normtv{ \pir(\cdot|\obs) - \pis(\cdot|\obs)}^2 = O \left( \frac{\log(|\Pi|/\delta)}{N} + \tau \right).
\end{align*}
\end{theo}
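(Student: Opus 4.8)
The plan is to run the maximum-likelihood argument of \citep{Agarwal2020MLE} behind \Cref{lem:decouple} and \Cref{lem:strongconvexity} on the \emph{authentic} part of the demonstrations, while using the median over batches to discard the corrupted batches at the price of an $O(1/b)=O(\epsilon)$ overhead. First I would reduce the claim to a Hellinger-type estimate: by \Cref{lem:strongconvexity} with $f_1=\pir(\cdot\mid\obs)$ and $f_2=\pis(\cdot\mid\obs)$ it suffices to upper bound
\[
D \;:=\; -2\log \Expe_{s\sim\dist,\;a\sim\pis(\cdot\mid s)}\sqrt{\pir(a\mid s)/\pis(a\mid s)}\;\ge\;\Expe_{s\sim\dist}\normtv{\pir-\pis}^2 .
\]
Since the demonstrations arrive along a trajectory rather than i.i.d., I would work (as in \citep{Agarwal2020MLE}) with the per-step conditional laws $\Dist_i$ of the $i$-th authentic state and their tangent sequence, using the discounted averaging that ties $\tfrac1N\sum_i\Dist_i$ to $\dist$ to pass between $D$ and $\sum_i\big(-\log\Expe_{x\sim\Dist_i,\,y\sim\pis}\sqrt{\pir/\pis}\big)$ up to universal constants.

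Next I would use the median to neutralize corruption. \Cref{def:Huber} modifies at most $\epsilon N$ pairs, and with $N=Mb$ and $b\le\frac1{3\epsilon}$ at most $\epsilon N=\epsilon M b\le M/3$ of the $M$ batches contain a corrupted pair, so at least $2M/3$ batches are clean. The min--max optimality of $\pir$, evaluated at $\pi'=\pis$ inside the maximum, gives $\mathrm{median}_{1\le j\le M}\big(\ell_j(\pir)-\ell_j(\pis)\big)\le\tau$, hence at least $M/2$ batches satisfy $\ell_j(\pir)-\ell_j(\pis)\le\tau$. Intersecting the two families produces a (data-dependent) set $S\subseteq[M]$ of at least $M/2-M/3=M/6$ \emph{clean} batches with $\sum_{(s,a)\in B_j}\tfrac12\log\frac{\pir(a\mid s)}{\pis(a\mid s)}\ge-\frac{b\tau}{2}$ for every $j\in S$; summing over $j\in S$ lower bounds the total clean log-likelihood ratio by $-\tfrac12|S|b\,\tau$.

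For the matching upper bound I would apply \Cref{lem:decouple} with $\Fcal=\Pi$ to the sub-collection of samples indexed by a \emph{fixed} subset of batches, with the additive statistic $\phi(\pi,(s,a))=\tfrac12\log\frac{\pi(a\mid s)}{\pis(a\mid s)}$, and then take a union bound over all $2^M$ subsets so the bound holds simultaneously for the set $S$ from the previous paragraph: with probability at least $1-c_1\delta$,
\[
\sum_{j\in S}\sum_{(s,a)\in B_j}\tfrac12\log\frac{\pir(a\mid s)}{\pis(a\mid s)}\;\le\;\log\Expe_{\Dcal'}\exp\!\big(\Gamma_S(\pir,\Dcal')\big)+\log|\Pi|+M\log 2+\log(1/\delta),
\]
where $\Gamma_S$ sums $\phi$ over $\cup_{j\in S}B_j$. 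Conditional independence of the tangent points factorizes the first term on the right, and \Cref{lem:strongconvexity} controls each factor, so $\log\Expe_{\Dcal'}\exp(\Gamma_S(\pir,\Dcal'))\le-\tfrac12\sum_{j\in S}\sum_{i\in B_j}\Expe_{x\sim\Dist_i}\normtv{\pir-\pis}^2\le 0$. Combining with the Step-2 lower bound, using $M/6\le|S|$, $|S|b\le N$, $M/N=1/b$, and the Step-1 reduction yields
\[
\Expe_{s\sim\dist}\normtv{\pir-\pis}^2\;\le\;D\;=\;O\!\Big(\tau+\frac{\log(|\Pi|/\delta)}{N}+\frac1b\Big),
\]
and $\frac1b=O(\epsilon)$ is absorbed into $\tau$ exactly as described after the theorem statement.

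The hard part will be this uniformity over subsets of batches: an adaptive adversary decides which batches get spoiled, so the concentration of the empirical log-likelihood ratios must hold for \emph{every} sub-collection of batches at once, and one must verify that the resulting $2^M$-fold union bound inflates the rate only by the additive $O(M/N)=O(1/b)=O(\epsilon)$ term — which is precisely what forces the batch-size constraint $b\le\frac1{3\epsilon}$ and is the statistical price of robustness. Making the martingale/tangent-sequence argument go through on the authentic sub-sample (rather than an i.i.d.\ sample), and converting the per-step sum $\sum_{j\in S}\sum_{i\in B_j}\Expe_{x\sim\Dist_i}\normtv{\pir-\pis}^2$ into $\Theta(N)\cdot\Expe_{s\sim\dist}\normtv{\pir-\pis}^2$, are secondary technicalities; the reduction of Step 1 and the counting of Step 2 are routine.
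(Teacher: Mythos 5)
Your proposal follows the same skeleton as the paper's proof: \Cref{lem:decouple} plus a Chernoff bound, the lower bound of the log-moment-generating term by squared TV via \Cref{lem:strongconvexity}, the feasibility of $\pis$ in the inner maximization of \cref{equ:pi_MOMBC} to turn the realized objective $\tau$ into a per-batch bound on $\ell_j(\pir)-\ell_j(\pis)$, and the counting that at least $2M/3$ batches are clean when $b\le\frac{1}{3\epsilon}$. Your intersection count ($\ge M/2$ batches at or below the median and $\ge 2M/3$ clean, hence $\ge M/6$ batches with both properties) is in fact more careful than the paper, which applies both the concentration and the objective bound to the median batch itself without ruling out that the median-valued batch is a corrupted one.

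Where you genuinely diverge is the concentration step, and this is where the discrepancy with the stated bound enters. The paper applies \cref{equ:chernoff} to a \emph{single} batch with the per-sample statistic inflated by $N/b$ (i.e., $\Gamma_j=\frac{N}{b}\sum_{i\in B_j}(\cdots)$), so the $\log|\Pi|+\log(1/\delta)$ price is paid once against an effective sample size of $N$ and no union bound over batch subsets appears; this is what formally preserves the rate $\frac{\log(|\Pi|/\delta)}{N}+\tau$. You instead aggregate over the data-dependent family $S$ of $\ge M/6$ good batches and pay a $2^M$ union bound. Two consequences. First, the $M\log 2$ term, normalized by $|S|b\ge N/6$, contributes an additive $O(M/N)=O(1/b)$, and this equals $O(\epsilon)$ only if you additionally \emph{choose} $b=\Theta(1/\epsilon)$; the constraint $b\le\frac{1}{3\epsilon}$ alone also permits $b=1$, for which $1/b=1$. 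Second, that $O(1/b)$ cannot be ``absorbed into $\tau$'': $\tau$ is the realized objective value and can be essentially zero (no actual corruptions, near-exact optimizer), so what you prove is $O\bigl(\frac{\log(|\Pi|/\delta)}{N}+\tau+\frac{1}{b}\bigr)$, which matches the paper's informal reading of $\tau$ as containing an $O(1/b)$ part but is strictly weaker than the displayed inequality. Finally, the step you dismiss as a secondary technicality --- converting $\frac{1}{|S|b}\sum_{j\in S}\sum_{i\in B_j}\Expe_{x\sim\Dist_i}\normtv{\pir-\pis}^2$ into $\Expe_{s\sim\dist}\normtv{\pir-\pis}^2$ --- is not routine for an adversarially thinned sub-sample: which indices survive into $S$ is partly under the adversary's control, and nonnegativity of the summands gives the comparison only in the wrong direction. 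The paper has the same unaddressed issue (worse, since it retains only the $b$ indices of one batch), but if you want a complete argument this conversion, not the union bound, is the step that needs a genuinely new idea.
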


\begin{proof}[Proof of \Cref{thm:pi_bound}]
En route to the proof of \Cref{thm:pi_bound}, we keep using the notations in \Cref{lem:decouple} and \Cref{lem:strongconvexity}, where the state observation is $x$, the action is $y$, and the discrete function class is $\Fcal$.

Similar to \cite{Agarwal2020MLE}, we first note that \Cref{lem:decouple} can be combined with a simple Chernoff
bound to obtain an exponential tail bound. With probability at least $1-c_1\delta$, we have
\begin{align}\label{equ:chernoff}
-\log \Expe_{\Dcal'} \exp(\Gamma(\fhat, \Dcal')) \leq -\Gamma(\fhat,\Dcal) + \log |\Fcal| + \log(1/\delta).
\end{align}
Our proof technique relies on lower bounding the LHS of \cref{equ:chernoff}, and upper bounding the RHS \cref{equ:chernoff}.

Let the batch size $b \leq \frac{1}{3\epsilon}$, which is  a constant in \Cref{def:RBC}, then the number of batches $M \geq 3\epsilon N$ such that there exists at least 66\% batches without corruptions. 

In the definition of RBC (\Cref{def:RBC}), we solve
\begin{align} \label{equ:pi_MOMBC_append}
    \pir =  \arg \min_{\pi \in \Pi} \max_{\pi' \in \Pi}   \underset{1\leq j \leq M}{\mathrm{median}} 
    \left(\ell_j(\pi) - \ell_j(\pi')\right).
\end{align} 

Notice that since $\pis$ is one feasible solution of the inner maximization step \cref{equ:pi_MOMBC_append}, we can choose $\pi' = \pis$. Now we consider the objective function which is the difference of Negative Log-Likelihood between $f$ and $f^*$, i.e.,   $\ell_j(f) - \ell_j(f^*)$, defined in \cref{equ:loss} where
\begin{align*} 
    \ell_j(\pi) = \frac{1}{b}\sum_{(\obs, \acts) \in B_j}-\log(\pi(\acts | \obs)).
\end{align*}

Hence, we choose $\Gamma(f,\Dcal)$ in \Cref{lem:decouple} as
\begin{align*}
    \Gamma_j(f, \Dcal) &= \frac{N}{b}\sum_{i\in B_j} -\frac{1}{2}
\log \frac{f^*(x_i,y_i)}{f(x_i,y_i)} \\
&= \frac{N}{2b}\sum_{i\in B_j}
\left(
\log f(x_i,y_i) - 
\log f^*(x_i,y_i)  
\right),
\end{align*}
which is the difference of Negative Log-Likelihood  ${N}(\ell_j(f^*) - \ell_j(f))/2$ evaluated on a single batch $B_j, j \in [M]$. This is actually the objective function on a single batch appeared in \cref{equ:pi_MOMBC}.

\paragraph{Lower bound for the LHS of \cref{equ:chernoff}.}
We apply the concentration bound \cref{equ:chernoff}  for such uncorrupted batches, hence the majority of all batches satisfies  \cref{equ:chernoff}.
For those batches, the LHS of \cref{equ:chernoff} can be lower bounded by the TV distance according to  \Cref{lem:strongconvexity}.
\begin{align}
& - \log \Expe_{\Dcal'} \left[ \exp \left( \frac{N}{b}\sum_{i\in B_j} -\frac{1}{2}\log\left( \frac{f^\star(x_i',y_i')}{\fhat(x_i',y_i')} \right) \right) \big| \Dcal \right] \nonumber \\
& \overset{(i)}{=} - \frac{N}{b}\sum_{i\in B_j}\log \Expe_{x,y\sim \Dist_i} \exp \left(-\frac{1}{2} \log \frac{f^\star(x,y)}{\fhat(x,y)} \right) \nonumber \\
& \overset{(ii)}{\geq} \frac{N}{2b}\sum_{i\in B_j} \Expe_{x \sim \Dist_i} \normtv{\fhat(x,\cdot) - f^\star(x,\cdot)}^2, \label{equ:lower}
\end{align}
where (i) follows from the independence between $\fhat$ and $\Dcal'$ due to the decoupling technique, and (ii) follows from \Cref{lem:strongconvexity}, which is an upper bound
of the Total Variation distance.

\paragraph{Upper bound for the RHS of \cref{equ:chernoff}.}
Note that the objective is the median of means of each batches and  $f^*$ is one feasible solution of the inner maximization step \cref{equ:pi_MOMBC_append}.
Since $\tau$ is the output objective value with $\pir$ in the optimization \cref{equ:pi_MOMBC}, this implies that
$\ell_{\rm Med}(\pi) - \ell_{\rm Med}(\pi') \leq \tau$ for the median batch $B_{\rm Med}$, which is equivalent to
$-\Gamma_{\rm Med}(f, \Dcal) \leq N\tau / 2$.

Hence for the median batch $B_{\rm Med}$,  the RHS of \cref{equ:chernoff} can be upper bounded by
\begin{align}\label{equ:upper}
-\Gamma_{\rm Med}(\fhat,\Dcal) + \log |\Fcal| + \log(1/\delta) 
\leq \log |\Fcal| + \log(1/\delta) + N \tau / 2.
\end{align}

Putting together the pieces \cref{equ:lower} and \cref{equ:upper} for $B_{\rm Med}$, we have
\begin{align*}
    \Expe_{s \sim \dist} \normtv{ \pir(\cdot|s) - \pis(\cdot|s)}^2 = O \left( \frac{\log(|\Fcal|/\delta)}{N} + \tau \right),
\end{align*}
with probability at least $1 - c_1\delta$.

\end{proof}

\subsection{Proof of \Cref{thm:V_bound}}

With the supervised learning guarantees \Cref{thm:pi_bound} in hand, which provides an upper bound for $\Expe_{s \sim \dist} \normtv{ \pir(\cdot|s) - \pis(\cdot|s)}^2$, we are now able to present the suboptimality guarantee of the reward for $\pir$. 
This bound directly corresponds to the reward performance of a policy. 
\begin{theo}[\Cref{thm:V_bound}]
Under the same setting as \Cref{thm:pi_bound}, we have
\begin{align*}
  J_{\pi_E} - J_{\pir} \leq
  O \left( \frac{1}{(1 - \gamma)^2}    \sqrt{\frac{\log(|\Fcal|/\delta)}{N} + \tau } \right),
\end{align*}
with probability at least $1 - c_1\delta$.
\end{theo}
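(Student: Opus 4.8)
The plan is to convert the conditional total-variation bound supplied by \Cref{thm:pi_bound} into a bound on the value gap via the performance difference lemma, applied in the direction that keeps the \emph{expert} occupancy $\dist = \rho_{\pis}$ on the outside, so that no separate concentrability or coverage assumption is required.

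First I would invoke the performance difference lemma (Kakade--Langford) with $\pis$ as the evaluated policy and $\pir$ as the baseline:
\[
J_{\pis} - J_{\pir} = \frac{1}{1-\gamma}\,\Expe_{s\sim\dist}\,\Expe_{a\sim\pis(\cdot|s)}\!\left[A^{\pir}(s,a)\right],
\]
where $A^{\pir}(s,a) = Q^{\pir}(s,a) - V^{\pir}(s)$. Writing $V^{\pir}(s) = \Expe_{a\sim\pir(\cdot|s)}Q^{\pir}(s,a)$, the inner expectation collapses to $\sum_a\big(\pis(a|s) - \pir(a|s)\big)Q^{\pir}(s,a)$, which I bound in absolute value by $\norm{Q^{\pir}(s,\cdot)}_\infty\cdot\norm{\pis(\cdot|s) - \pir(\cdot|s)}_1$. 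Since $r$ takes values in $[0,\mathsf{R}_{\max}]$ we have $\norm{Q^{\pir}(s,\cdot)}_\infty \le \mathsf{R}_{\max}/(1-\gamma)$, and $\norm{\pis(\cdot|s)-\pir(\cdot|s)}_1 = 2\,\normtv{\pir(\cdot|s) - \pis(\cdot|s)}$. Collecting the two factors of $(1-\gamma)^{-1}$ — one from the discounted-occupancy normalization in the lemma and one from the magnitude of $Q^{\pir}$, which is precisely where the quadratic horizon dependence originates — gives
\[
J_{\pis} - J_{\pir} \;\le\; \frac{2\mathsf{R}_{\max}}{(1-\gamma)^2}\,\Expe_{s\sim\dist}\normtv{\pir(\cdot|s) - \pis(\cdot|s)}.
\]

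Next I would pass from this $L^1$-flavored quantity to the squared quantity that \Cref{thm:pi_bound} actually controls: by Jensen's inequality (equivalently Cauchy--Schwarz),
\[
\Expe_{s\sim\dist}\normtv{\pir(\cdot|s) - \pis(\cdot|s)} \;\le\; \sqrt{\,\Expe_{s\sim\dist}\normtv{\pir(\cdot|s) - \pis(\cdot|s)}^2\,}.
\]
On the probability-$(1-c_1\delta)$ event of \Cref{thm:pi_bound} the radicand is $O\!\big(\log(|\Pi|/\delta)/N + \tau\big)$; substituting this and absorbing the constant $\mathsf{R}_{\max}$ into the $O(\cdot)$ notation yields
\[
J_{\pis} - J_{\pir} = O\!\left(\frac{1}{(1-\gamma)^2}\sqrt{\frac{\log(|\Pi|/\delta)}{N} + \tau}\right)
\]
on that same event, which is the claimed bound.

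The main obstacle is conceptual rather than computational: applying the performance difference lemma the other way (evaluating $\pir$ against the baseline $\pis$) would leave $\Expe_{s\sim\rho_{\pir}}$, an expectation under the \emph{learned} policy's visitation distribution, which \Cref{thm:pi_bound} does not control and which would force an unavailable concentrability assumption relating $\rho_{\pir}$ to $\dist$. Choosing the orientation above sidesteps this — at the price of replacing $A^{\pis}$ by $A^{\pir}$, whose range is still $O(\mathsf{R}_{\max}/(1-\gamma))$, so nothing is lost. I would also remark, as in \Cref{remark:shift}, that the $(1-\gamma)^{-2}$ factor is intrinsic to purely offline imitation learning and is not an artifact of this argument.
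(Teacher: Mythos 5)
Your proposal is correct and follows essentially the same route as the paper: the performance difference lemma with the expectation taken over $\dist$, bounding the inner expectation by $\norm{\pis(\cdot|s)-\pir(\cdot|s)}_1$ times the $O(1/(1-\gamma))$ range of the advantage (equivalently of $Q^{\pir}$), Jensen/Cauchy--Schwarz to pass to the squared total-variation quantity, and then substitution of the bound from \Cref{thm:pi_bound}. The only cosmetic difference is that you bound via $\norm{Q^{\pir}(s,\cdot)}_\infty$ while the paper bounds $\sup|A^{\pir}|$ directly; these are interchangeable here.
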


\begin{proof}[Proof of \Cref{thm:V_bound}]
This part is similar to \cite{Agarwal2019AJKS}, and we have
\begin{align*}
    (1 - \gamma) (J_{\pi_E} - J_{\pir}) &= \Expe_{s\sim \dist} \Expe_{a\sim {\pis(\cdot | s)}}  A^{\pir}(\obs, \acts)\\
    &\leq \frac{1}{1 - \gamma } \sqrt{\Expe_{s\sim \dist} \norm{\pir(\cdot|s) - \pis(\cdot|s)}_1^2}\\
    &=    \frac{2}{1 - \gamma } \sqrt{\Expe_{s\sim \dist} \normtv{\pir(\cdot|s) - \pis(\cdot|s)}^2},
\end{align*}
where we use the fact that
$\sup_{s, a, \pi} \abs{A^\pi(\obs, \acts)} \leq \frac{1}{1-\gamma}$ for the advantage function and the reward is always bounded between 0 and 1.

Combining \Cref{thm:pi_bound}, we have
\begin{align*}
  J_{\pi_E} - J_{\pir} \leq
  O \left( \frac{1}{(1 - \gamma)^2} 
   \sqrt{\frac{\log(|\Fcal|/\delta)}{N} + \tau } \right),
\end{align*}
with probability at least $1 - c_1\delta$.
\end{proof}

\section{Experimental Details}

\label{sec:Exp_appendix}

In this section, we provide the details of our algorithm RBC in different setups.
All Behavior Cloning models were trained to minimize the mean-squared error regression loss on the demonstration data for 200 epochs using Adam \cite{kingma2014adam}.
In all setting, we fix the policy network as 3 hidden layer feed-forward Neural Network of size $\{500, 500, 500\}$ with ReLU activation. More hyperparameters are provided in \Cref{tab:para}.

\begin{table}[h!]
\centering
\caption{\label{tab:para}Hyperparameters}
\begin{tabular}{|l|l|}
\hline
\textbf{Hyperparameter}        & \textbf{Value}                      \\ \hline \hline
Parallel Environments & 20                         \\ \hline
$\ell_2$ regularization & 0                         \\ \hline
Entropy coefficient & 0.01                         \\ \hline
Gradient clipping     & 0.1                        \\ \hline
Learning rate         & $7.5*10^{-4}$ \\ \hline
\end{tabular}
\end{table}

\paragraph{Reward vs. Epochs.}
We illustrate the convergence of our algorithm by tracking the reward performance 
for different continuous control environments
simulated by PyBullet \cite{coumans2016pybullet} simulator: HopperBulletEnv-v0, 
Walker2DBulletEnv-v0,
HalfCheetahBulletEnv-v0
and AntBulletEnv-v0.
More specifically, we evaluate current policy in the simulator for 20 trials, and obtain the mean and standard deviation of cumulative reward for every 5 epochs.
In this experiment, we adopt option (1) for the outliers, which set the actions of outliers to the boundary ($-1$ or $+1$).
In \Cref{fig:epochs_appendix}, we
fix the corruption ratio as 10\% and 20\% with
fixed demonstration
data of size 60000, and present the Reward vs. Epochs.
We note that the difference of purple curves in  $\epsilon=0.1$ and $\epsilon=0.2$ is due to different random seed.

\paragraph{Convergence time.}
Another important aspect of our algorithm is the practical time complexity efficiency.
To speed up our RBC, 
we pick multiple batches around the median batch in line 7 of \Cref{alg:RBC}, and evaluate the gradient using back-propagation on these batches.
The experimental setup is consistent with \Cref{fig:epochs_appendix}.
To directly compare the time complexity, we report the reward vs. wall clock time performance of our RBC and ``Oracle BC'', which optimizes on the expert demonstrations.

We measure the convergence time by
counting the elapsed time  from zero to first achieving 95\% of expert level.
The experiments are conducted on 1/2 core of NVIDIA T4 GPU, and presented in \Cref{tab:time},
which shows that
the actual running time time of RBC is comparable to vanilla BC.

\begin{table}[h!]
\centering
\caption{\label{tab:time}Convergence time (wall-clock seconds) }
\begin{tabular}{|l|l|l|l|l|}
\hline
          & Hopper & HalfCheetah & Ant & Walker2D \\ \hline \hline
Oracle BC & 88     & 174         & 49     &  159        \\ \hline
RBC       & 368    & 597         & 134     &   385       \\ \hline
\end{tabular}
\end{table}

\begin{figure}[t]
\centering
\begin{subfigure}{.45\columnwidth}
  \centering
  \includegraphics[width=.8\linewidth]{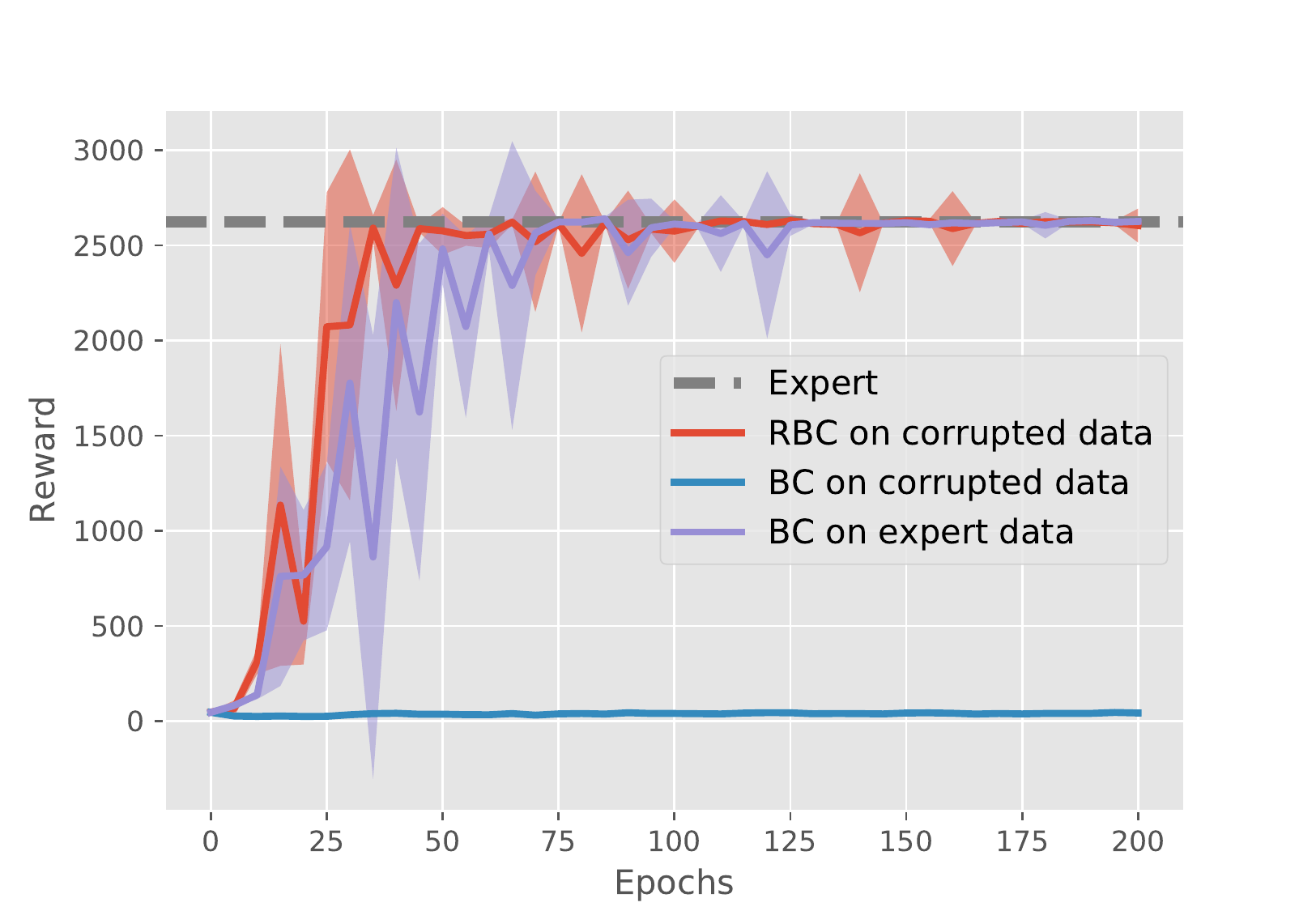}
  \caption{Hopper environment with $\epsilon = 0.1$.}
\end{subfigure}%
\begin{subfigure}{.45\columnwidth}
  \centering
  \includegraphics[width=.8\linewidth]{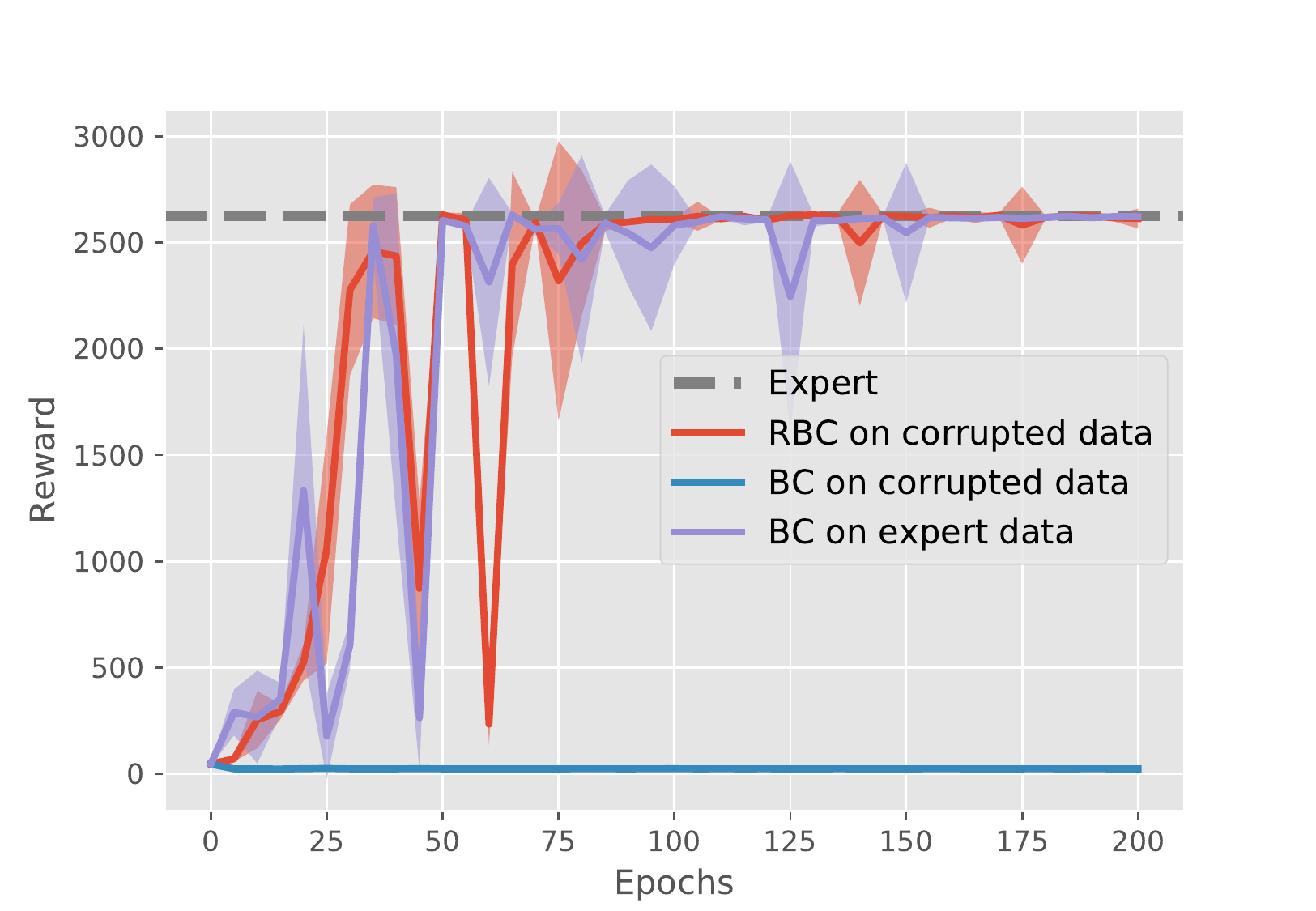}
  \caption{Hopper environment with $\epsilon = 0.2$.}
\end{subfigure}%

\begin{subfigure}{.45\columnwidth}
  \centering
  \includegraphics[width=.8\linewidth]{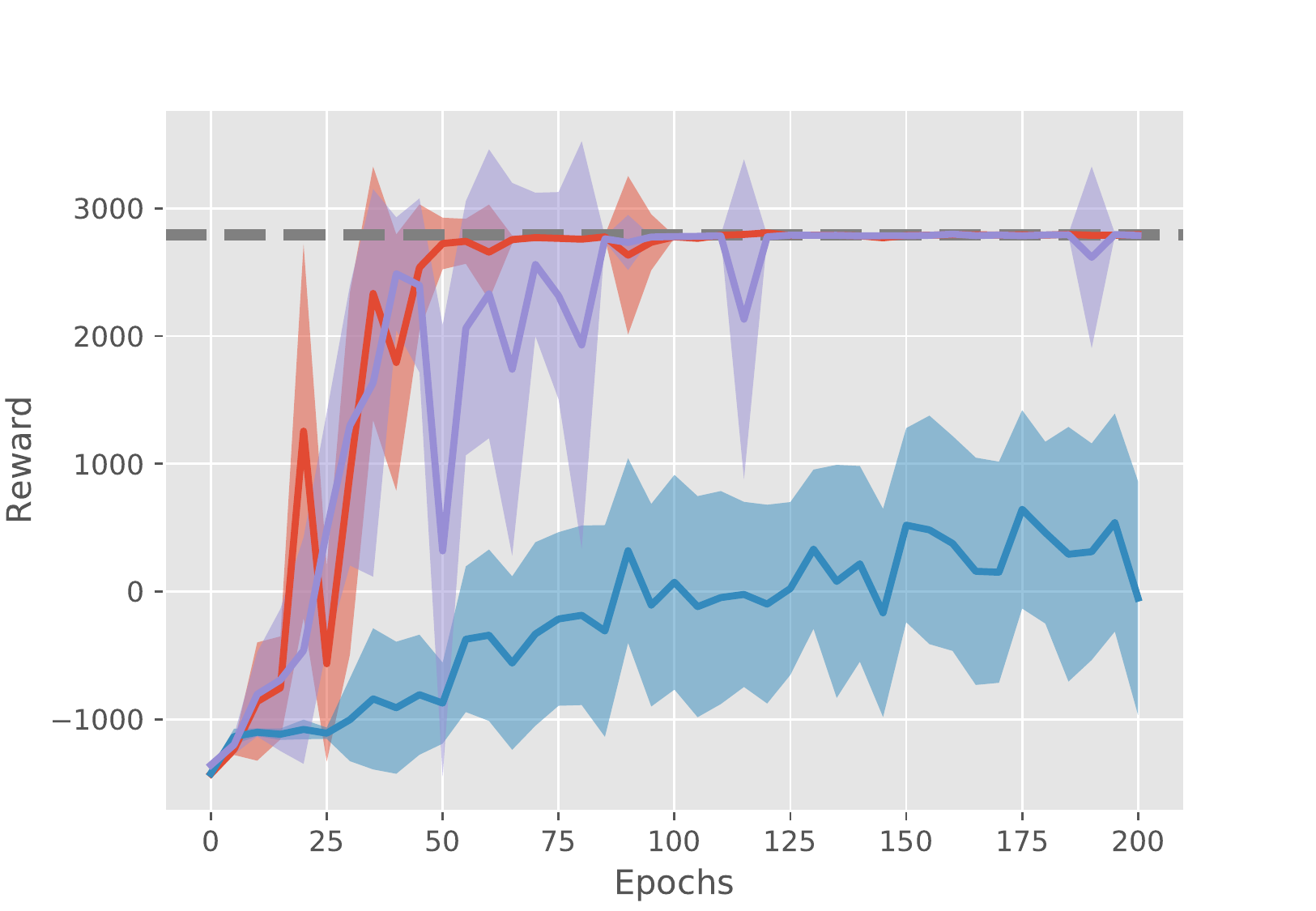}
    \caption{HalfCheetah environment with $\epsilon = 0.1$.}
\end{subfigure}
\begin{subfigure}{.45\columnwidth}
  \centering
  \includegraphics[width=.8\linewidth]{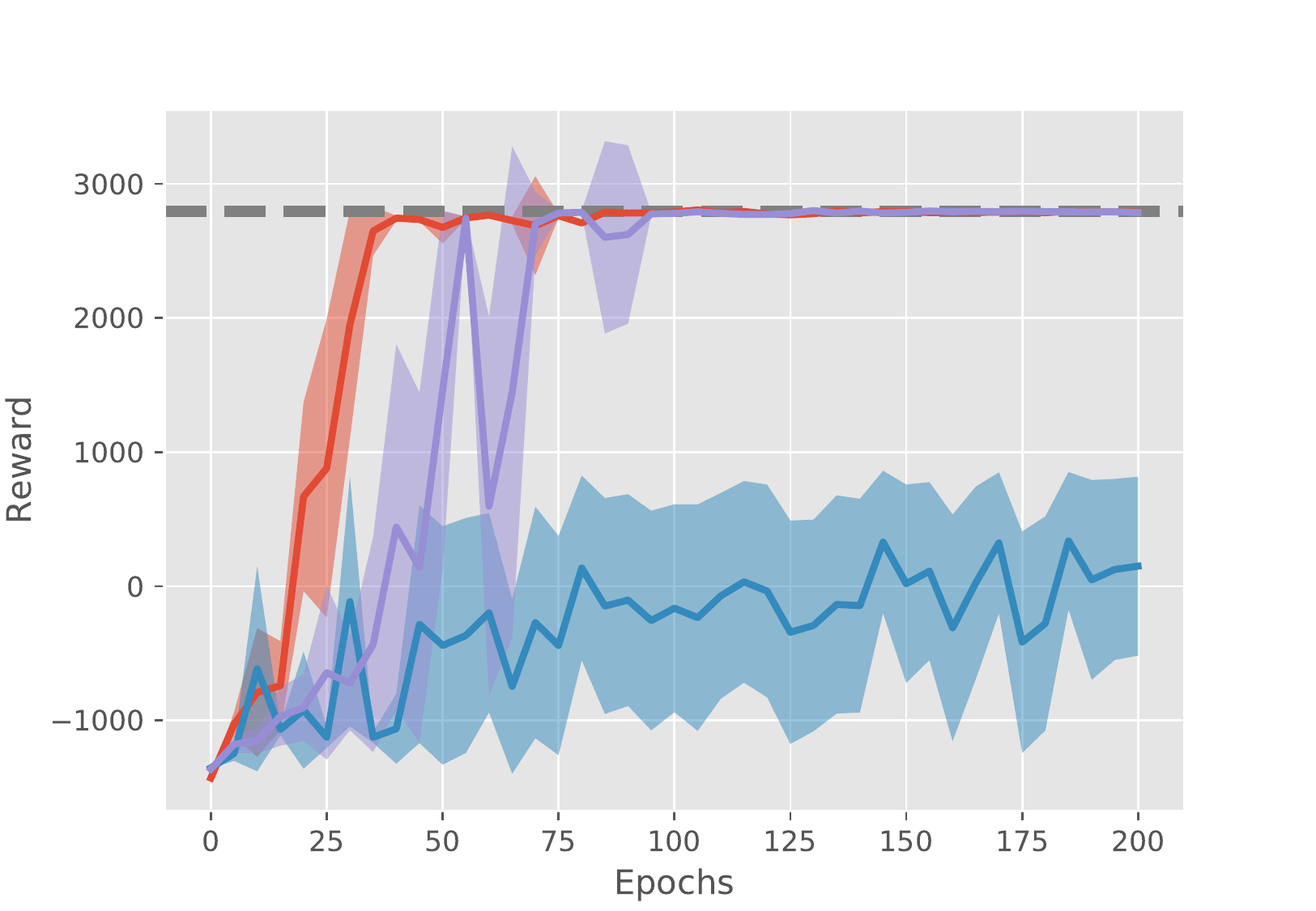}
    \caption{HalfCheetah environment with $\epsilon = 0.2$.}
\end{subfigure}

\begin{subfigure}{.45\columnwidth}
  \centering
  \includegraphics[width=.8\linewidth]{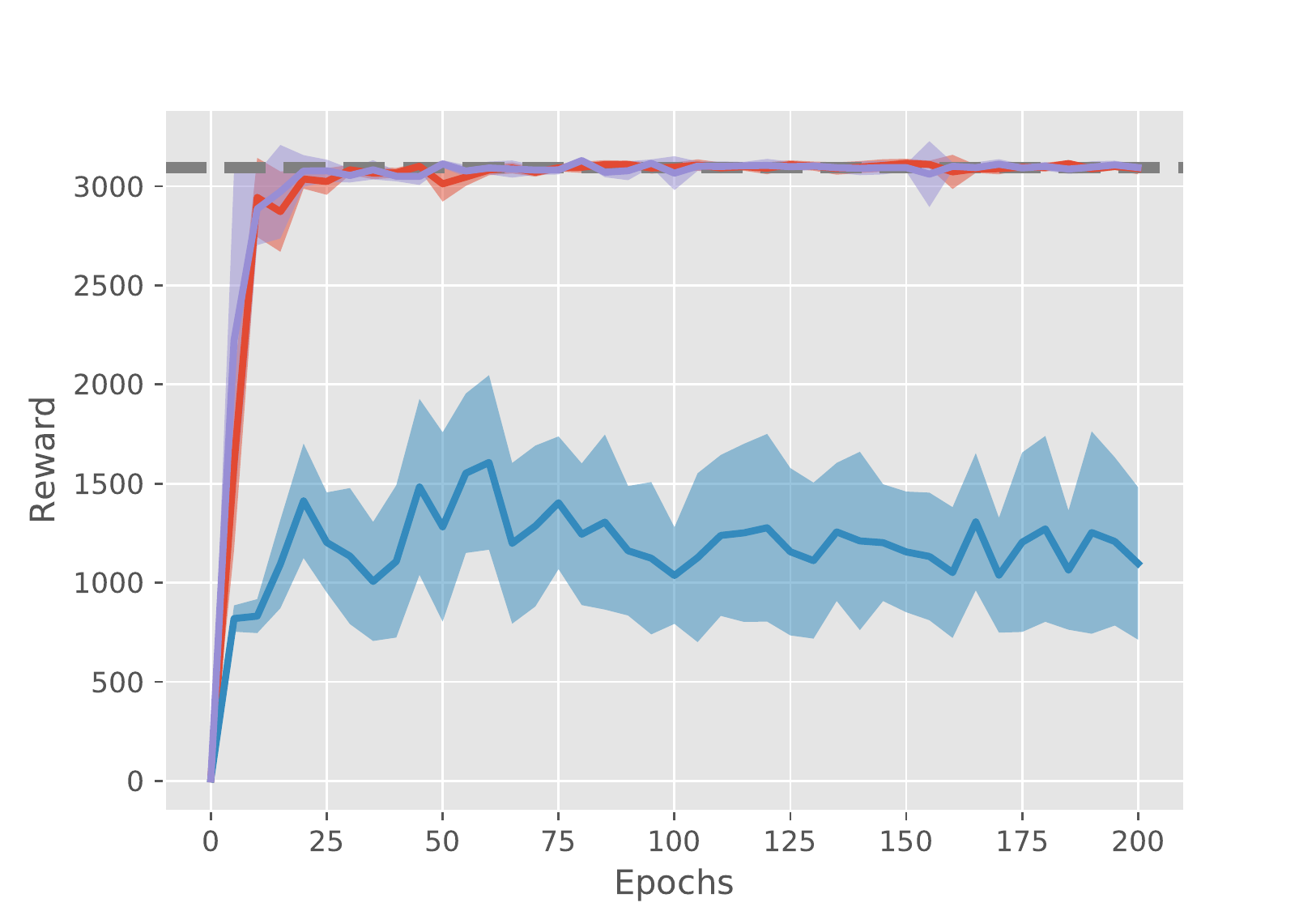}
      \caption{Ant environment with $\epsilon = 0.1$.}
\end{subfigure}
\begin{subfigure}{.45\columnwidth}
  \centering
  \includegraphics[width=.8\linewidth]{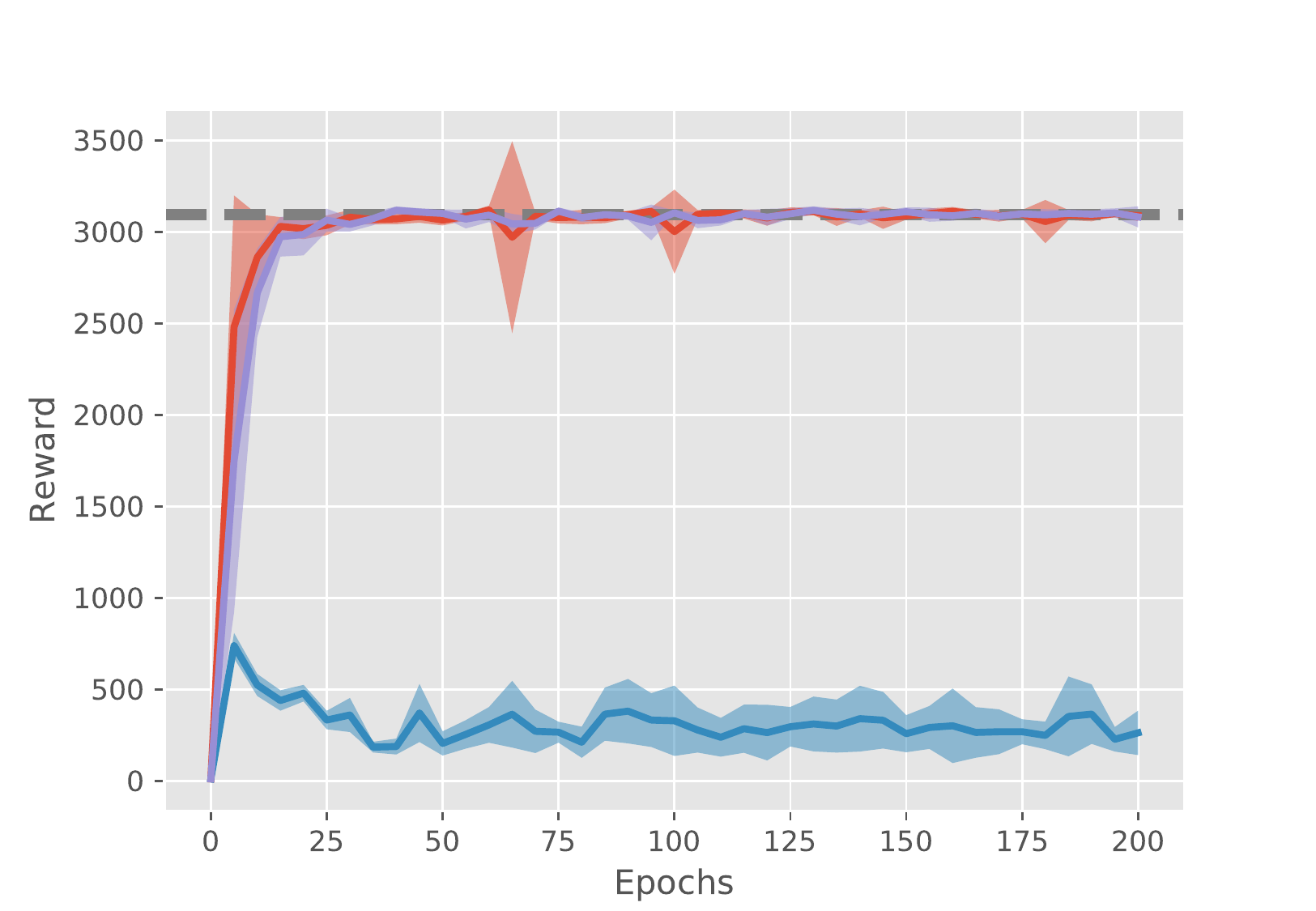}
      \caption{Ant environment with $\epsilon = 0.2$.}
\end{subfigure}

\begin{subfigure}{.45\columnwidth}
  \centering
  \includegraphics[width=.8\linewidth]{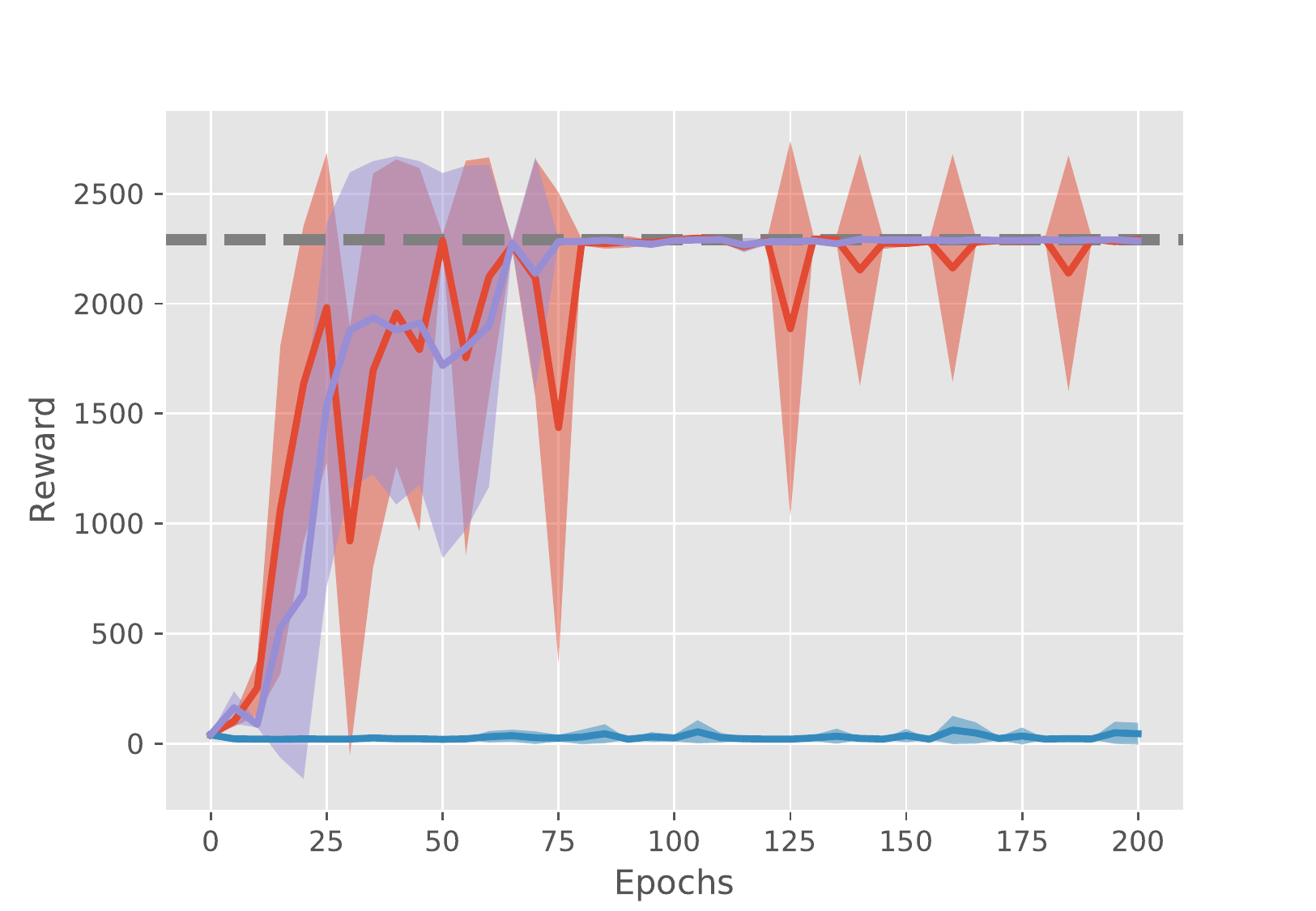}
      \caption{Walker2D environment with $\epsilon = 0.1$.}
\end{subfigure}
\begin{subfigure}{.45\columnwidth}
  \centering
  \includegraphics[width=.8\linewidth]{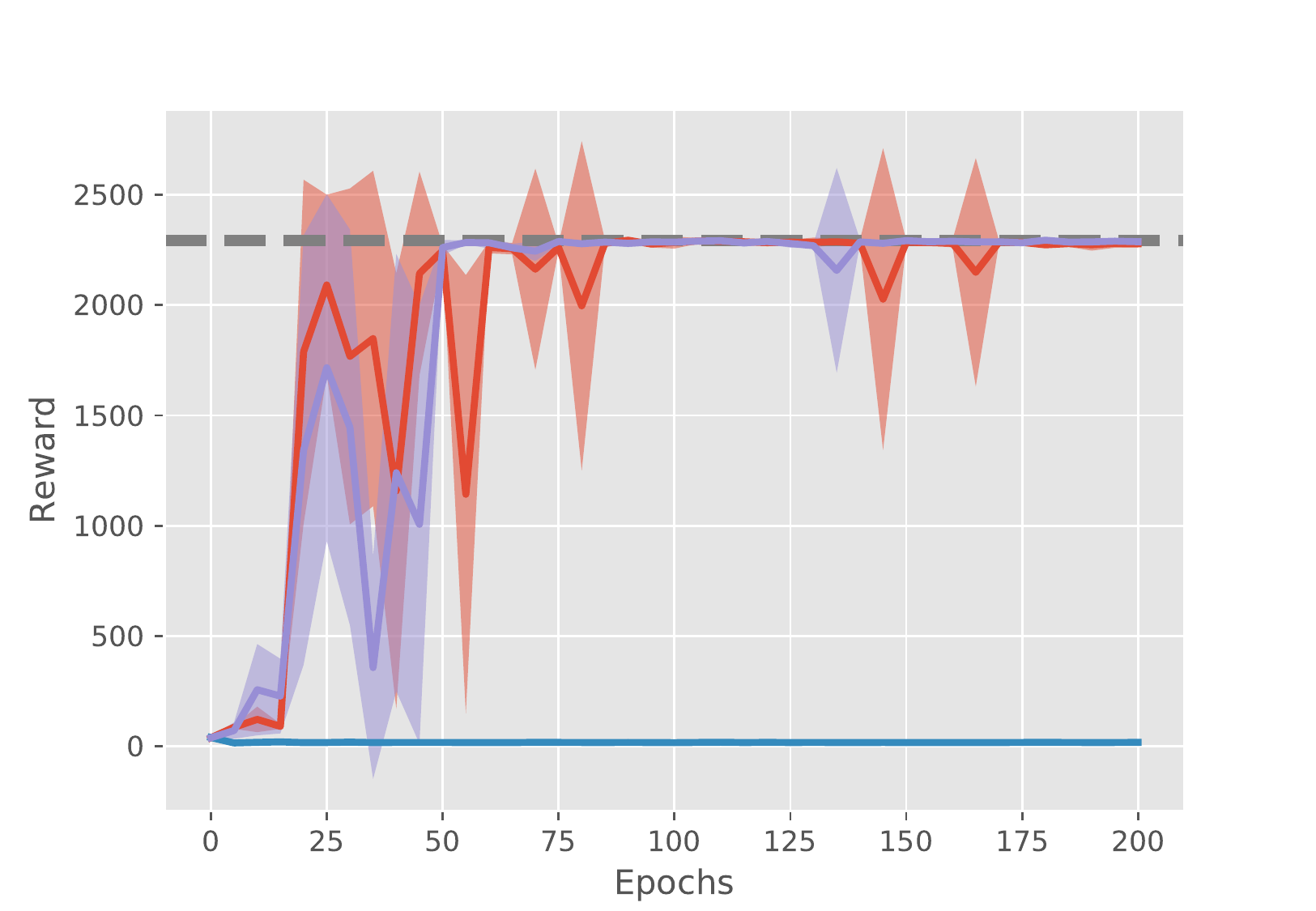}
      \caption{Walker2D environment with $\epsilon = 0.2$.}
\end{subfigure}
\caption{Reward vs. Epochs for 
offline Imitation Learning on four different continuous control tasks from PyBullet \cite{coumans2016pybullet} with
fixed demonstration
data of size 60000. 
We choose the corruption ration $\epsilon =$ 10\%, 20\%. 
For every 5 epochs, we evaluate
the current policy in the environment for 20 trials, and the shaded region represents the standard
deviation.
We note that the difference of purple curves between left and right is due to different random seed.
Vanilla BC on corrupted demonstrations fails to converge to expert policy. 
Using the
robust counterpart \Cref{alg:RBC} on corrupted demonstrations has good convergence properties. Surprisingly, our RBC on corrupted demonstrations has nearly the same reward performance of using
BC on \emph{expert demonstrations}.
}
\label{fig:epochs_appendix}
\end{figure}

\end{document}